\newcommand{\sd}[1]{{\scriptsize #1}}
\newtheorem{lemma}{Lemma}
\begin{document}

\title{Flexible-weighted Chamfer Distance: Enhanced Objective Function for Point Cloud Completion}

\author{Jie Li, Shengwei Tian, Long Yu, and Xin Ning
\thanks{J. Li, S. Tian, and L. Yu are with Xinjiang University, Urumqi 830046, China. E-mail: \{lijie@stu.xju.edu.cn, tianshengwei@163.com, yul\_xju@163.com\}}
\thanks{X. Ning is with the Institute of Semiconductors, Chinese Academy of Sciences, Beijing 100083, China. E-mail: ningxin@semi.ac.cn}
\thanks{Corresponding author: Shengwei Tian (e-mail: tianshengwei@163.com).}
}




\maketitle
\begin{tikzpicture}[remember picture,overlay]
    \node[anchor=south,yshift=10pt] at (current page.south) {
        \begin{minipage}{\textwidth}
            \footnotesize
            \centering
            © 2026 IEEE. Personal use of this material is permitted. Permission from IEEE must be obtained for all other uses, in any current or future media, including reprinting/republishing this material for advertising or promotional purposes, creating new collective works, for resale or redistribution to servers or lists, or reuse of any copyrighted component of this work in other works.
        \end{minipage}
    };
\end{tikzpicture}
\begin{abstract}
The Chamfer Distance (CD) is a cornerstone objective function for point cloud completion, yet its inherent symmetric weighting mechanism limits the quality of the generated results. By penalizing local detail deviations and global coverage deficiencies equally, standard CD often causes structural defects such as point aggregation and incomplete spatial structures. We introduce the Flexible-weighted Chamfer Distance (FCD), which decouples CD into local precision and global completeness sub-objectives. FCD employs an asymmetric weighting strategy that prioritizes global structural integrity, steering the optimization away from sub-optimal solutions. 
As a plug-and-play module with negligible overhead, extensive experiments on state-of-the-art networks demonstrate that FCD significantly enhances global distribution metrics while preserving local precision. Specifically, on the ShapeNet55 benchmark using AdaPoinTr, FCD reduces the Density-aware Chamfer Distance (DCD) by approximately 12.4\% (from 0.613 to 0.537), effectively mitigating point clustering. Similarly, on the PCN dataset, the proposed method reduces the Earth Mover's Distance (EMD) from 23.79 to 21.40, demonstrating superior global uniformity compared to the standard CD baseline.
Furthermore, FCD demonstrates excellent generalization. When applied to diverse tasks and datasets, including real-world scans (KITTI), industrial components (ABC), and point cloud upsampling (PU-GAN), it yields significant quantitative gains and produces visually more uniform and structurally complete point clouds. These results underscore FCD's potential as a versatile objective function for the broader point cloud generation domain.
\end{abstract}

\begin{IEEEkeywords}
Point cloud completion, Flexible-weighted chamfer distance, Chamfer distance, Earth Mover's Distance, Objective function optimization, Weighting strategies. 
\end{IEEEkeywords}

\section{Introduction}
As a core digital representation of the 3D world, point clouds play a vital role in fields such as autonomous driving, robotics, and virtual reality. However, due to constraints like physical occlusions, sensor viewpoints, and sampling resolution, point clouds acquired in the real world are often sparse or incomplete. Point cloud completion, which aims to recover the complete geometry of an object from sparse partial observations, is a key technology for downstream applications that rely on precise 3D shape information \cite{guo2020deep}. In deep learning-driven completion methods, the choice of an objective function is paramount, as it directly guides the model's optimization path and determines the geometric fidelity and structural plausibility of the output.

In the domain of point cloud completion, Earth Mover's Distance (EMD) and Chamfer Distance (CD)~\cite{fan2017point} are widely used as both evaluation metrics and objective functions. Here, we focus on their application as objective functions. Although EMD can accurately capture differences in global distribution, its high computational complexity limits its use as a primary objective function in large-scale training. Consequently, the computationally efficient CD has become the de facto standard for point cloud completion tasks~\cite{yuan2018pcn,xie2020grnet,xiang2021snowflakenet,tang2022lake,tchapmi2019topnet,yang2017foldingnet}. However, a fundamental bottleneck underlies the widespread adoption of CD: its inherent symmetric weighting mechanism can trigger a gradient conflict between the two objectives of "local geometric fidelity" and "global structural completeness" during optimization.

Specifically, standard CD equally weights the forward term (nearest-neighbor distance from predicted points to ground-truth points), which aims to ensure local precision, and the backward term (nearest-neighbor distance from ground-truth points to predicted points), which aims to guarantee global coverage. This static balance can lead to opposing and mutually canceling gradients during optimization, trapping the network in a local minimum characterized by point clustering and difficulty in switching nearest neighbors. This results in generated point clouds with defects such as local over-densification, structural holes, or surface discontinuities. Although density-aware DCD \cite{wu2021density} offers evaluation improvements, it fails to resolve this optimization issue when used as a training objective.

This paper introduces the Flexible-weighted Chamfer Distance (FCD). Our core idea is to decouple CD into two independent sub-tasks—local precision and global coverage—and introduce an asymmetric, dynamically schedulable weighting strategy. This strategy guides the model to first prioritize the construction of a complete object topology by assigning a higher weight to the global coverage term in the early stages of training, breaking the optimization stalemate. Subsequently, the weights gradually transition to a more balanced configuration, allowing for the fine-tuning of local geometric details upon the well-established global framework. This "global-first, details-later" optimization paradigm mitigates gradient conflicts and steers the model toward convergence in a more optimal solution space. As a plug-and-play module, FCD can be seamlessly integrated into various mainstream point cloud completion networks with negligible computational cost. The main contributions of this paper are summarized as follows:
\begin{itemize}
    \item From the perspective of gradient dynamics, we identify and analyze how the symmetric weighting mechanism of standard CD is a root cause of common defects in point cloud generation, such as local clustering and structural holes.
    \item We propose the Flexible-weighted Chamfer Distance (FCD) principle, positing that an asymmetric weighting scheme (\(\beta > \alpha\)) provides a more effective and stable optimization path. We then conduct a systematic investigation into various weighting strategies (e.g., preset adaptive and uncertainty-based) that implement this principle, exploring how these schemes can be used to optimally balance the trade-off between improving global distribution metrics (e.g., EMD, DCD) and maintaining local precision (e.g., CD).
    \item Extensive experiments on representative networks (AdaPoinTr, SeedFormer) and benchmark datasets (ShapeNet55, PCN) demonstrate that FCD significantly outperforms standard CD on global distribution metrics such as DCD and EMD, while maintaining or improving local precision metrics like CD and F-Score.
    \item We validate the generalization capability of FCD across various data domains and tasks, including real-world scenarios (KITTI), complex industrial components (ABC), and point cloud upsampling, proving its broad application potential as a general-purpose objective function.
\end{itemize}

The remainder of this paper is organized as follows. Section~\ref{related work} reviews related work on point cloud analysis, completion, and objective functions. Section~\ref{method} elaborates on the proposed FCD, detailing its mathematical formulation, gradient dynamics analysis, and specific weighting strategies. Section~\ref{results} presents extensive experimental results on mainstream benchmarks (ShapeNet55, PCN) and generalization tasks (KITTI, ABC, PU-GAN), along with ablation studies and complexity analysis. Finally, Section~\ref{conclusion} concludes the paper and discusses limitations and future directions.

\section{Related Work}
\label{related work}

\subsection{Point Cloud Analysis}
Early works converted input point clouds into 2D images, e.g., MVCNN \cite{su2015mvcnn}, or 3D voxels, e.g., VoxNet \cite{maturana2015voxnet} and PVCNN \cite{liu2019pvcnn}. Following the success of PointNet \cite{qi2017pointnet} and PointNet++ \cite{qi2017pointnet++}, direct analysis of 3D coordinates became the mainstream. Subsequent works focused on enhancing local feature extraction by generalizing CNNs (PointCNN \cite{li2018pointcnn}), using density-weighted kernels (PointConv \cite{wu2019pointconv}), defining deformable kernels (KPConv \cite{thomas2019kpconv}), or constructing local graphs (PointWeb \cite{zhao2019pointweb}, DGCNN \cite{wang2019dgcnn}). More recently, Transformer-based architectures \cite{zhao2021pointtransformer, guo2021pct} and masked autoencoding techniques, such as Point-BERT \cite{yu2022pointbert}, have been introduced to learn both low-level structures and high-level semantics. 
These pioneering works have laid a solid foundation for downstream tasks such as point cloud completion.

\subsection{Point Cloud Completion}
Point cloud completion involves reconstructing a full shape from partially observed data. While methods utilizing voxels and 3D convolutions for completion \cite{dai2017shape,han2017high,stutz2018learning,liu2019point} are computationally expensive, more recent approaches have shifted towards encoder-decoder architectures based on PointNet and DGCNN to address the point cloud completion task. PCN \cite{yuan2018pcn} established a pioneering coarse-to-fine framework. Since then, researchers have explored various approaches, such as using 3D grids (GRNet \cite{xie2020grnet}), hierarchical decoders (TopNet \cite{tchapmi2019topnet}), deformation-based steps (MSN \cite{liu2020morphing}, PMP-Net \cite{wen2021pmp}), fractal-based networks (PF-Net \cite{huang2020pf}), or cascaded refinement (CRN \cite{wang2021cascaded}). With the success of Transformers, a series of completion networks have been proposed, including SnowflakeNet \cite{xiang2021snowflakenet}, PoinTr \cite{yu2021pointr}, and LAKeNet \cite{tang2022lake}. 
The baselines used in our study, SeedFormer \cite{zhou2022seedformer} and AdaPoinTr \cite{yu2023adapointr}, also leverage Transformer-based designs. Building on these foundations, recent studies have explored more complex Transformer architectures \cite{yu2024geoformer,rong2024crapcn}, generative priors like diffusion models \cite{wei2025pcdreamer}, and template-guided strategies \cite{duan2024tcorresnet}. Despite their differences in network architecture and generation paradigms, these advanced methods almost universally rely on the CD \cite{fan2017point} as their core training objective. This widespread dependence on CD makes the performance of the objective function itself a critical bottleneck, constraining the potential of these advanced network architectures. Therefore, an in-depth analysis and improvement of CD have become critically important.

\subsection{Point Cloud Objective Functions}
The computationally efficient CD is the de facto standard in point cloud completion, as the high computational cost of alternatives like EMD makes them impractical. Standard CD symmetrically aggregates two components: an accuracy term (ensuring local fidelity) and a completeness term (ensuring global coverage). However, this fixed, symmetric weighting creates an inherent conflict between the two objectives, particularly in highly expressive models. This conflict can lead to contradictory gradient updates and optimization bottlenecks. To mitigate this limitation, existing research has primarily explored three approaches. One class of methods augments the loss function, for example, by combining it with EMD \cite{wen2020point} or introducing a uniformity regularization term \cite{li2019pu}. Another class focuses on improving the distance metric itself, such as the DCD \cite{wu2021density}, which, while an excellent evaluation metric, is less effective as a training loss. The latest research attempts to fundamentally reconstruct CD, for instance, by introducing contrastive learning principles to measure distance in a learned feature space (InfoCD) \cite{lin2023infocd}, or by applying non-Euclidean geometry to better capture hierarchical structures (Hyperbolic Chamfer Distance) \cite{lin2023hyperbolic}. Although these methods seek improvements by introducing new constraints or altering the metric space, they have largely overlooked the fundamental conflict caused by CD's symmetric weighting from the perspective of the optimization process and gradient dynamics. Our work fills this gap by being the first to focus on guiding the optimization path through a flexible gradient weighting scheme.

\subsection{Multi-task Learning}
Multitask learning aims to leverage shared representation learning to concurrently address multiple tasks, achieving broad success across natural language processing \cite{liu2015representation,liu2019multi}, speech processing \cite{hu2015fusion,wu2015deep}, and computer vision \cite{leang2020dynamic}. In this learning framework, the losses from various tasks are amalgamated through a weighted approach, with weight-setting strategies including static weighting 
\cite{qu2019attentive,yeh2019flowdelta} and dynamic weighting \cite{leang2020dynamic,belharbi2016deep,chen2018gradnorm,liu2019loss,ning2021uncertainty}. Considering the two components of CD as distinct learning objectives, we introduce FCD to instruct the training of point cloud completion networks. FCD seeks to employ diverse weighting strategies to alleviate the constraints of CD, thus elevating the quality of outcomes produced.

\begin{table}[t]
    \centering
    \caption{Table of Notations.}
    \label{tab:notations}
    \resizebox{\linewidth}{!}{%
    \begin{tabular}{@{}lll@{}}
        \toprule
        \textbf{Symbol} & \textbf{Description} \\
        \midrule
        \(\mathcal{P}, \mathcal{G}\) & Predicted point cloud and ground-truth point cloud, respectively. \\
        \(\mathcal{P}_{\text{in}}\) & The input incomplete point cloud to the network. \\
        \(\mathcal{P}_{\text{coarse}}, \mathcal{P}_{\text{fine}}\) & The coarse and fine point clouds generated by the network, respectively. \\
        \(p, g\) & An individual point from \(\mathcal{P}\) and \(\mathcal{G}\), respectively.\\
        \(|\cdot|\) & The cardinality of a point set (i.e., the number of points). \\
        \(\|\cdot\|_2\) & The L2 norm, representing Euclidean distance. \\
        \(d_{\text{CD}}\) & The standard Chamfer Distance. \\
        \(d_{\text{CD}_{\text{local}}}\) & The local-fitting term of CD (from predicted to ground-truth). \\
        \(d_{\text{CD}_{\text{global}}}\) & The global-coverage term of CD (from ground-truth to predicted). \\
        \textbf{\(d_{\text{FCD}}\)} & \textbf{Our proposed Flexible-weighted Chamfer Distance.} \\
        \(\alpha, \beta\) & The weights for the local-fitting and global-coverage terms in FCD. \\
        \(\theta, \tau\) & The upper and lower bounds for the FCD weights in our weighting strategies. \\
        \(\epsilon, T\) & The current and total training steps, respectively. \\
        \(t\) & The transition epoch for dynamic weighting (e.g., Stair, Abridged Linear). \\
        \(\mathcal{L}_{\text{coarse}}, \mathcal{L}_{\text{fine}}\) & The loss calculated for the coarse and fine stages, respectively. \\
        \(\mathcal{L}_{\text{total}}\) & The total training loss, combining coarse and fine stage losses. \\
        K & The total number of coarse stages. \\
        \(\mathcal{P}_c^{(k)}\) & The \(k\)-th coarse point cloud from the intermediate stages. \\
        \(\mathcal{W}(\epsilon)\) & The weighting scheduler function at training step \(\epsilon\). \\
        \bottomrule
    \end{tabular}}

\end{table}

\begin{figure}[t]
  \centering
  \includegraphics[width=\linewidth]{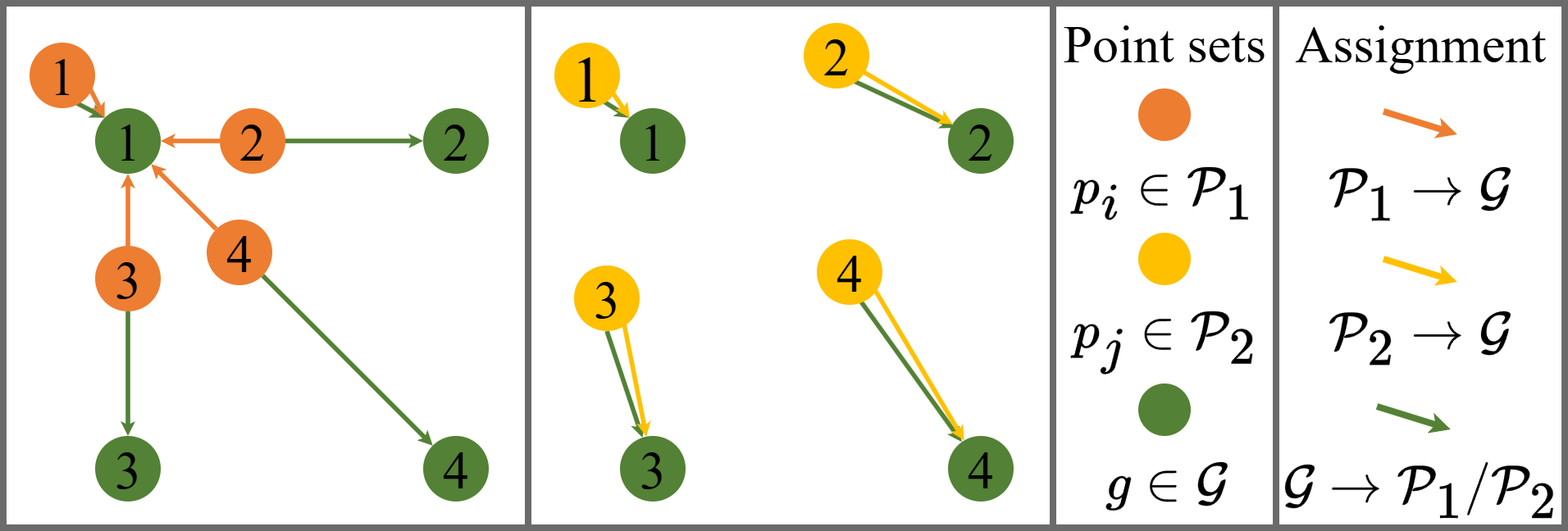}
  \caption{The ambiguity of standard CD. Two predicted point clouds \(\mathcal{P}_1\) (clustering) and \(\mathcal{P}_2\) (uniform) yield the same CD value relative to the ground truth \(\mathcal{G}\).}
  \label{fig:explain-cd}
\end{figure}

\section{Flexible-weighted CD as an Objective Function}
\label{method}
To address the symmetric weighting limitation of the standard CD, this section details our proposed FCD. We begin by reviewing the fundamental definitions of CD and related metrics. Subsequently, we analyze the mathematical formulation of FCD, its advantages from a gradient dynamics perspective, and its specific weighting strategies. Finally, we demonstrate its integration as an objective function into end-to-end networks. For clarity, Table~\ref{tab:notations} summarizes the main notations used in this paper and their definitions.

\subsection{Preliminaries}
\label{prelimi}

Assuming two point sets, the predicted point cloud \(\mathcal{P}\) and the ground-truth point cloud \(\mathcal{G}\), the Chamfer Distance (CD) between them can be defined as:
\begin{equation}
    d_{\text{CD}}(\mathcal{P}, \mathcal{G}) = \frac{1}{|\mathcal{P}|} \sum_{p \in \mathcal{P}} \min_{g \in \mathcal{G}} \|p - g\|_2 + \frac{1}{|\mathcal{G}|} \sum_{g \in \mathcal{G}} \min_{p \in \mathcal{P}} \|g - p\|_2,
\end{equation}
where \(|\mathcal{P}|\) and \(|\mathcal{G}|\) represent the number of points in \(\mathcal{P}\) and \(\mathcal{G}\), respectively, and \(\|\cdot\|_2\) denotes the L2 norm, i.e., the Euclidean distance. In practice, there are two common variants of CD, depending on whether the distance term is squared. Following convention \cite{yuan2018pcn,zhou2022seedformer,yu2023adapointr}, we define them as follows:
\begin{equation}
        d_{\text{CD-}\ell_1}(\mathcal{P}, \mathcal{G}) = \frac{1}{2} \big( \frac{1}{|\mathcal{P}|} \sum_{p \in \mathcal{P}} \min_{g \in \mathcal{G}} \|p - g\|_2 
         + \frac{1}{|\mathcal{G}|} \sum_{g \in \mathcal{G}} \min_{p \in \mathcal{P}} \|g - p\|_2 \big),
\end{equation}
\begin{equation}
    d_{\text{CD-}\ell_2}(\mathcal{P}, \mathcal{G}) = \frac{1}{|\mathcal{P}|} \sum_{p \in \mathcal{P}} \min_{g \in \mathcal{G}} \|p - g\|_2^2 
    + \frac{1}{|\mathcal{G}|} \sum_{g \in \mathcal{G}} \min_{p \in \mathcal{P}} \|g - p\|_2^2,
\end{equation}
where \(\text{CD-}\ell_1\) (the \(\ell_1\) version) uses the direct Euclidean distance and offers better robustness to outliers. \(\text{CD-}\ell_2\) (the \(\ell_2\) version) uses the squared Euclidean distance, which is more sensitive to subtle changes in shape by amplifying distance differences but is correspondingly more susceptible to outliers. Unless otherwise specified, the FCD method proposed hereafter is applicable to both variants. The core of CD is to compute the distance from each point \(p \in \mathcal{P}\) (or \(g \in \mathcal{G}\)) to its nearest point in \(\mathcal{G}\) (or \(\mathcal{P}\)) and sum these distances to obtain an overall similarity measure between the two point sets. It does not require a one-to-one correspondence between points, and its simple and flexible form allows for high computational efficiency and good generalization. In contrast, the Earth Mover's Distance (EMD) is defined as:
\begin{equation}
    d_{\text{EMD}}(\mathcal{P}, \mathcal{G}) = \min_{\varphi: \mathcal{P} \rightarrow \mathcal{G}} \sum_{p \in \mathcal{P}} \|p - \varphi(p)\|_2.
\end{equation}
EMD calculates the minimum distance required to move one set of points to another, where \(\mathcal{P}\) and \(\mathcal{G}\) must be point sets of equal size, and \(\varphi:\mathcal{P}\rightarrow \mathcal{G}\) denotes a mapping function that maps each point \(p\) in \(\mathcal{P}\) to a point 
\(\varphi(p)\) in \(\mathcal{G}\). Employing EMD for supervised point cloud completion can surpass CD in effectiveness but often at a high computational cost. The newly introduced Density-aware Chamfer Distance (DCD) \cite{wu2021density} is an innovative approach for assessing point set similarity, specifically aimed at evaluating point cloud completion results, which is defined as: 
\begin{equation}
    \begin{split}
         d_{\text{DCD}}(\mathcal{P}, \mathcal{G}) = \frac{1}{2} \big( \frac{1}{|\mathcal{P}|} \sum_{p \in \mathcal{P}} (1 - \frac{1}{n_{\hat{g}}} e^{-\alpha \|p-\hat{g}\|_2}) \\
         + \frac{1}{|\mathcal{G}|} \sum_{g \in \mathcal{G}} (1 - \frac{1}{n_{\hat{p}}} e^{-\alpha \|g-\hat{p}\|_2}) \big),
    \end{split}
\end{equation}
where \(\hat{p}=\min_{p\in\mathcal{P}}\|g-p\|_2\), \(\hat{g}=\min_{g\in\mathcal{G}} \|p-g\|_2\), and \(n_{\hat{p}}\) denotes the count of points in \(\mathcal{G}\) closest to \(\hat{p}\in\mathcal{P}\) , and vice versa. The fundamental principle of DCD involves introducing query frequency to account for the local density distribution of points, thereby enhancing the sensitivity of distance calculations to local density. Additionally, DCD's value range is bounded, typically within [0, 1], preventing excessive sensitivity to outliers exhibiting quadratic growth. This ensures its stability and rationality in assessing point cloud completion outcomes. The scaling factor \(\alpha\) adjusts sensitivity, commonly set as \(\alpha=1000\) when serving as an evaluation metric.

\subsection{Formulation and Interpretation}
\label{para:fcd-explain}

\begin{figure}[t]
  \centering
  \includegraphics[width=\linewidth]{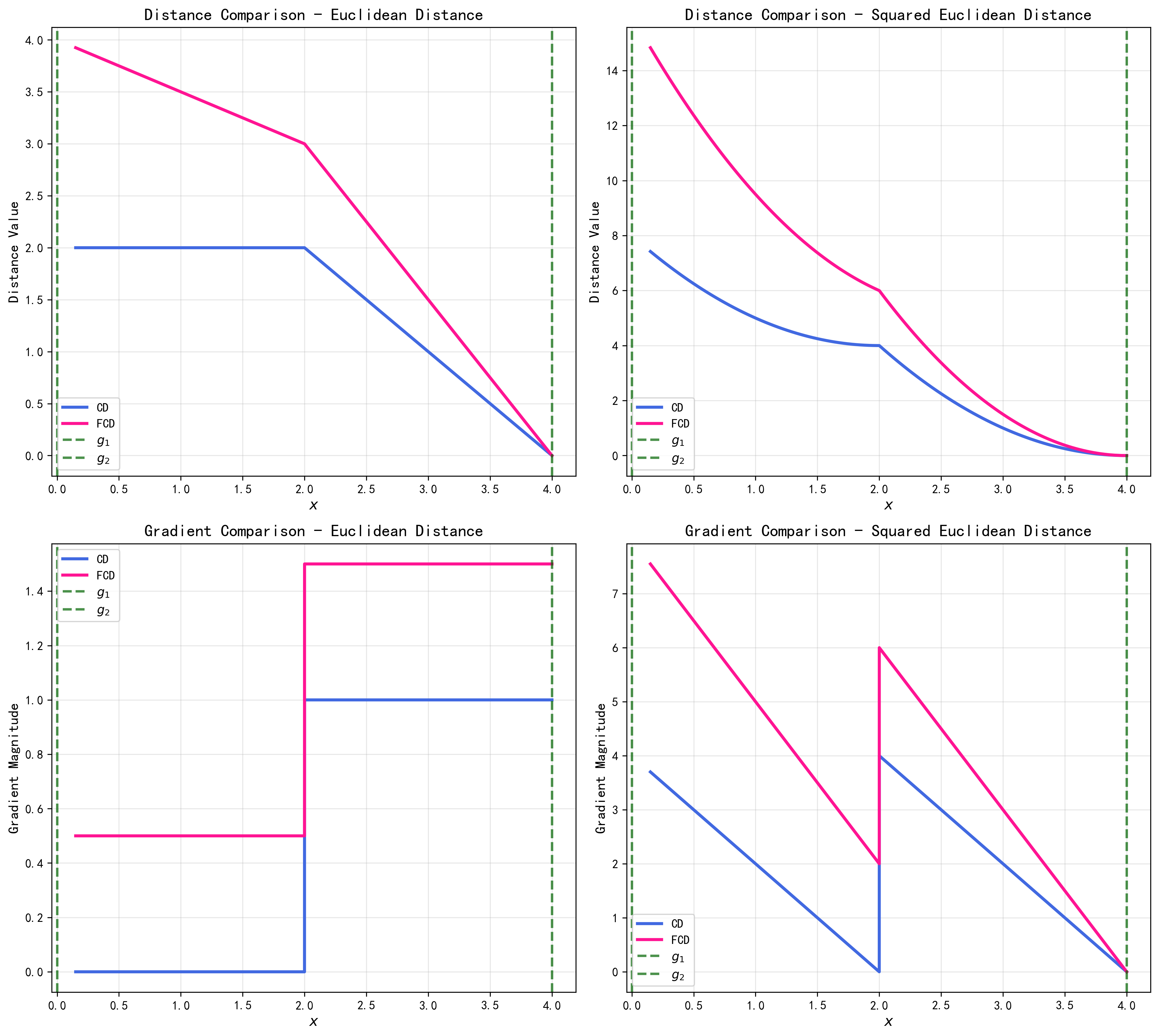}
  \caption{Gradient and distance analysis for CD vs. FCD. The values and gradient changes of CD and FCD for \(p_2=[x,0]\) at different positions on the line connecting \(g_1=[0,0]\) and \(g_2=[4,0]\) (while ensuring the distance from \(p_2\) to \(g_1\) is greater than from \(p_1\) to \(g_1\)). FCD maintains a non-vanishing gradient, while the standard CD gradient vanishes, causing an optimization stalemate.}
  \label{fig:comparison}
\end{figure}

CD is a classic metric for measuring the similarity between two point clouds—typically a predicted and a ground-truth point cloud in generation scenarios—and is often used directly as an objective function. For analytical purposes, we decompose it into its two directional components:

\begin{equation}
    d_{\text{CD}_\text{{local}}} = \frac{1}{|\mathcal{P}|} \sum_{p \in \mathcal{P}} \min_{g \in \mathcal{G}} \|p - g\|_2,
\end{equation}
\begin{equation}
    d_{\text{CD}_\text{{global}}} = \frac{1}{|\mathcal{G}|} \sum_{g \in \mathcal{G}} \min_{p \in \mathcal{P}} \|g - p\|_2,
\end{equation}

where \(||\cdot||_2\) denotes the L2 norm (Euclidean distance), and as is common in CD-based methods, the squared L2 norm (squared Euclidean distance) can also be used in practice. The first term, \(d_{\text{CD}_\text{{local}}}\) (local-fitting term), originates from the predicted points and measures the nearest distance from each predicted point to the ground-truth point cloud. It only requires each predicted point to be "close" to some ground-truth point and does not enforce coverage of all ground-truth points. Thus, it reflects the local precision and detail-fitting capability of the predicted point cloud. The second term, \(d_{\text{CD}_\text{{global}}}\) (global-coverage term), originates from the ground-truth points and requires that each ground-truth point can find a nearest counterpart in the predicted point cloud, reflecting the completeness of the predicted point cloud's global coverage of the true structure. Owing to its ability to balance overall structural consistency with local detail precision, CD has been widely adopted as an evaluation metric and objective function; however, it still exhibits inherent limitations.

\textbf{As an evaluation metric.}
A single scalar value may correspond to markedly different local/global trade-off states, such as excellent local fitting with incomplete coverage, or vice versa. This trade-off ambiguity can result in predicted point clouds with disparate distributional morphologies yielding identical CD values. For intuitive illustration, Fig.~\ref{fig:explain-cd} depicts two predicted results \(\mathcal{P}_1\) and \(\mathcal{P}_2\) relative to \(\mathcal{G}\), both exhibiting the same CD but with \(\mathcal{P}_1\) displaying evident local over-clustering and \(\mathcal{P}_2\) demonstrating a more balanced global structural distribution.

\textbf{As an objective function.}
Traditional CD is equivalent to assigning symmetric weights (\(\alpha=\beta\)) to the two terms, making the gradient contributions from the local and global terms equal in their scalar weighting. While formally reasonable, in practice, when predicted points cluster (as in the case of \(\mathcal{P}_1\) , often caused by network capacity or initialization), the angle between the local and global term gradients for some points may be obtuse. This creates a "mutual cancellation" effect, making it easy to get trapped in a local optimum. To alleviate this problem, we propose the Flexible-weighted Chamfer Distance (FCD):
\begin{equation}
    d_{\text{FCD}}(\mathcal{P}, \mathcal{G}) = {\alpha}d_{\text{CD}_\text{{local}}} + {\beta}d_{\text{CD}_\text{{global}}}, \qquad \alpha, \beta>0, 
\end{equation}
where \(\alpha\) and \(\beta\) regulate local fitting precision and global coverage completeness, respectively. By adopting an asymmetric design with \(\beta > \alpha\) in the early stages of training, we can proactively break the state where predicted points cluster and fail to "disperse for coverage". Evidently, traditional CD is a special case where \(\alpha=\beta\) , making FCD backward-compatible with the vast number of existing CD-based methods. In practice, either Euclidean distance (\(\ell_1\) version) or squared Euclidean distance (\(\ell_2\) version) can be used.

To analyze the advantages of FCD from the perspective of objective function optimization and gradient dynamics, we first present the following lemma.

\begin{lemma}

For \(p,g \in \mathbb{R}^3\) and \(p \neq g\), let the Euclidean distance be \(d(p,g)=\|p-g\|_2\) and the squared Euclidean distance be \(d^2(p,g)=\|p-g\|_2^2\). Then,
\begin{equation}
\frac{\partial d(p,g)}{\partial p} = \frac{p-g}{\|p-g\|_2}, \qquad \frac{\partial d^2(p,g)}{\partial p} = 2(p-g).
\end{equation}
\end{lemma}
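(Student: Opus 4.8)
The plan is to treat both identities as direct component-wise differentiations, beginning with the squared distance, since the Euclidean distance is most cleanly handled afterward as its square root via the chain rule. First I would expand the squared Euclidean distance in coordinates, writing $d^2(p,g) = \sum_{i=1}^{3} (p_i - g_i)^2$ with $p = (p_1,p_2,p_3)$ and $g = (g_1,g_2,g_3)$. Differentiating with respect to the $j$-th coordinate $p_j$ gives $\partial d^2/\partial p_j = 2(p_j - g_j)$, and assembling the three partials into a gradient vector yields $\partial d^2/\partial p = 2(p-g)$. This establishes the second identity immediately, and it holds for all $p$, including the case $p = g$.

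For the first identity, I would write $d(p,g) = \sqrt{d^2(p,g)}$ and apply the chain rule. Setting $u = d^2(p,g)$, we have $\partial d/\partial p_j = \frac{1}{2\sqrt{u}}\,\partial u/\partial p_j$. Substituting $\sqrt{u} = \|p-g\|_2$ together with the already-computed $\partial u/\partial p_j = 2(p_j - g_j)$, the factors of $2$ cancel and we obtain $\partial d/\partial p_j = (p_j - g_j)/\|p-g\|_2$. Collecting the coordinates gives $\partial d/\partial p = (p-g)/\|p-g\|_2$, as claimed.

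There is no substantive obstacle here; the only point requiring care—and the reason the hypothesis $p \neq g$ is imposed—is that the square root is not differentiable at the origin, so $d$ fails to be differentiable precisely at $p = g$, where the denominator $\|p-g\|_2$ vanishes. I would therefore explicitly note that $p \neq g$ guarantees $\|p-g\|_2 > 0$, which validates the chain-rule step and makes the resulting gradient well-defined. The geometric reading worth recording is that $\partial d/\partial p$ is a \emph{unit} vector directed from $g$ toward $p$, so the $\ell_1$ gradient has magnitude independent of separation, whereas the $\ell_2$ gradient $2(p-g)$ scales linearly with distance—a contrast that the subsequent gradient-dynamics analysis of FCD exploits.
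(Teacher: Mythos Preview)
Your proposal is correct and follows essentially the same approach as the paper: both arguments expand in coordinates and apply the chain rule to $\sqrt{\sum_i (p_i-g_i)^2}$, with the only cosmetic difference being that you compute the squared-distance gradient first and reuse it, whereas the paper handles the two cases independently. Your explicit remark on why $p\neq g$ is needed and the geometric interpretation of the unit-vector gradient are welcome additions not spelled out in the paper's version.
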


\begin{proof}
    Let \(d(p,g)=\sqrt{(p\!-\!g)^T(p\!-\!g)}=\sqrt{\sum_{i=1}^3 (p_i \!-\! g_i)^2}\). By the chain rule,
    \begin{equation}
    \frac{\partial d(p,g)}{\partial p} = \frac{1}{2\sqrt{\sum_{i=1}^3(p_i-g_i)^2}}\cdot 2(p-g)=\frac{p-g}{\|p-g\|_2}.
    \end{equation}

    And for \(d^2(p,g)=(p-g)^T(p-g)=\sum_{i=1}^3 (p_i-g_i)^2\), direct differentiation gives
    \begin{equation}
    \frac{\partial d^2(p,g)}{\partial p}=2(p-g).
    \end{equation}
\end{proof}

To simplify the analysis, we only examine the matching and gradient behavior of the subsets \(\mathcal{P}_1'=\{p_1,p_2\}\) and \(\mathcal{G}'=\{g_1,g_2\}\) as shown in Fig.~\ref{fig:explain-cd}, and we set the FCD weights to \(\alpha=1, \beta=2\). Here, \(p_1\) and \(g_1\) have already formed a good match (its force vector is stable and points towards the optimum), so we focus on the movement of \(p_2\). When \(p_2\) is located on the line segment between \(g_1\) and \(g_2\), and the distance from \(p_2\) to \(g_1\) is greater than the distance from \(p_1\) to \(g_1\), the global term expects it to "disperse" towards \(g_2\). However, before it crosses the midpoint (after which its nearest neighbor for the local term would switch from \(g_1\) to \(g_2\)), the incorrect pairing of the local term still "pulls" it back towards \(g_1\), hindering the assignment switch.

Before crossing the midpoint, we have (using \(d^{r}\) to denote either the first-order distance \(r=1\) or squared distance \(r=2\)):
\begin{equation}
\frac{\partial d_{\text{CD}}(\mathcal{P}_1',\mathcal{G}')}{\partial p_2} = \frac{1}{2}\frac{\partial d^{r}(p_2,g_1)}{\partial p_2}+\frac{1}{2}\frac{\partial d^{r}(g_2,p_2)}{\partial p_2},
\end{equation}

\begin{equation}
\frac{\partial d_{\text{FCD}}(\mathcal{P}_1',\mathcal{G}')}{\partial p_2} = \frac{1}{2}\frac{\partial d^{r}(p_2,g_1)}{\partial p_2}+\frac{\partial d^{r}(g_2,p_2)}{\partial p_2}.
\end{equation}
When \(r=1\) (using Euclidean distance), the two gradient terms have opposite directions and equal coefficients, resulting in
\begin{equation}
\frac{\partial d_{\text{CD}}}{\partial p_2}=0, \qquad \frac{\partial d_{\text{FCD}}}{\partial p_2} = \frac{1}{2}\frac{p_2-g_2}{\|p_2-g_2\|_2}, 
\end{equation}
which means CD provides no driving force for \(p_2\) at this stage, whereas FCD still maintains an effective gradient. When \(r=2\) (using squared Euclidean distance):
\begin{equation}
\frac{\partial d_{\text{CD}}}{\partial p_2}=2p_2-(g_1+g_2), \qquad \frac{\partial d_{\text{FCD}}}{\partial p_2}=3p_2-(g_1+2g_2),
\end{equation}
FCD exerts a stronger "pull" in the global direction (towards \(g_2\)).

To further illustrate this, we show in Fig.~\ref{fig:comparison} the function values and gradient changes of CD and FCD for \(p_2=[x,0]\) at different positions on the line connecting \(g_1=[0,0]\) and \(g_2=[4,0]\) (while ensuring the distance from \(p_2\) to \(g_1\) is greater than from \(p_1\) to \(g_1\)). We observe that: (i) When using CD (especially the \(\ell_1\) version), the effective gradient for \(p_2\) tends to vanish (or decay rapidly) before it crosses the midpoint \([2,0]\), making it difficult to complete the nearest-neighbor switch; (ii) When using FCD with \(\beta>\alpha\), the additional gradient provided by the global term breaks the "stalemate," enabling \(p_2\) to cross the midpoint and continue evolving towards a globally balanced distribution.

\textbf{Takeaway.} Emphasizing the global term (\(\beta > \alpha\)) supplies a non-vanishing global gradient that enables assignment switches by breaking optimization stalemates, mitigating stagnation and promoting uniform coverage while retaining CD’s geometric interpretability.

\begin{figure}[t]
  \centering
  \includegraphics[width=0.9\linewidth]{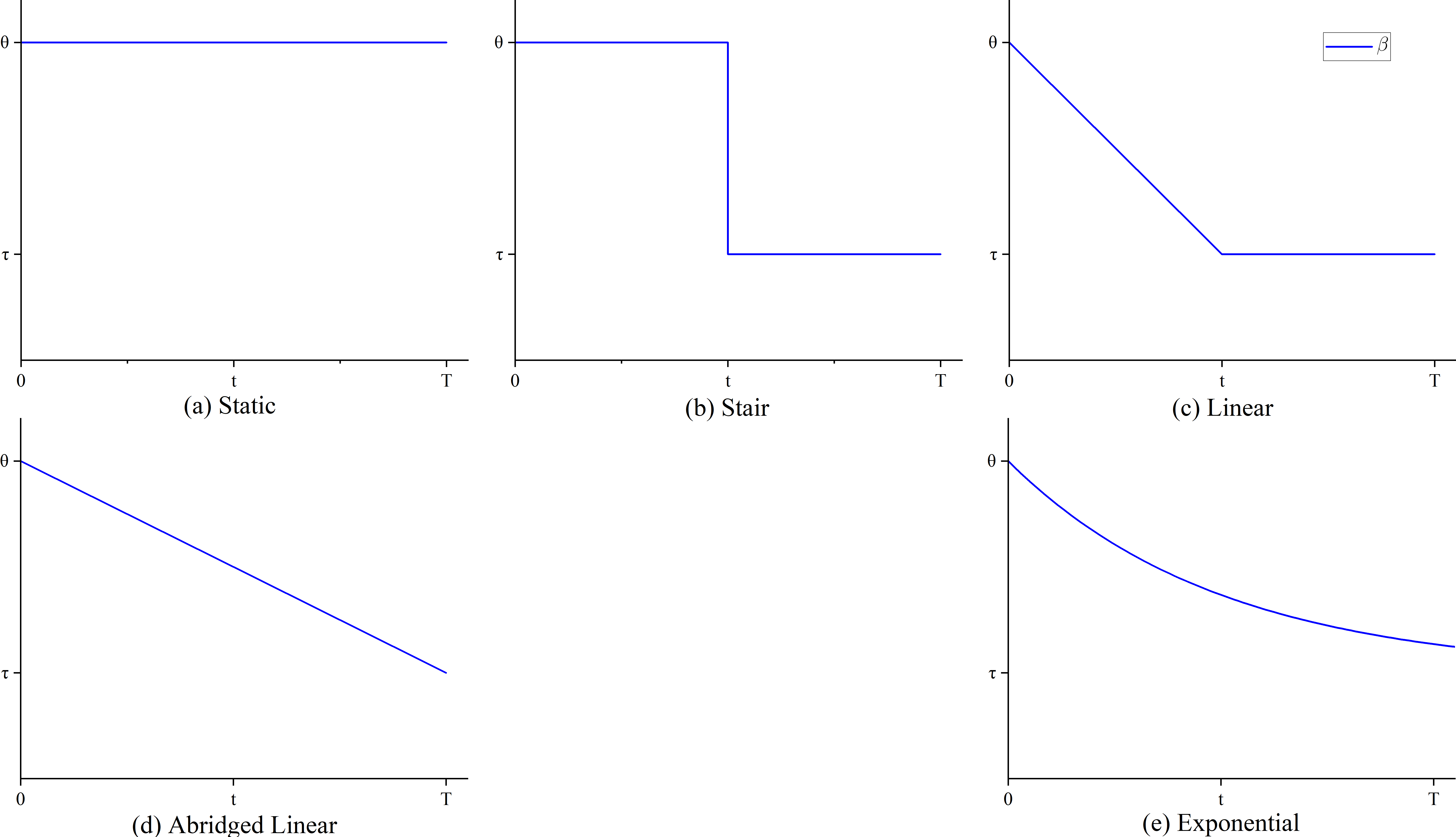}
  \caption{The five Preset Adaptive Weighting schedules for the global coverage weight (\(\beta\)) over training steps (\(T\)).}
  \label{fig:weighting}
\end{figure}

\subsection{Weighting Strategy}
\label{para:weighting-strategy}

The FCD principle of \(\beta > \alpha\) (Section~\ref{para:fcd-explain}) provides a general design framework. A key practical question is how to apply this principle to best manage the inherent trade-off between global coverage and local precision. Therefore, we now introduce and analyze several complementary families of weighting strategies. This investigation aims to understand how different scheduling approaches impact the optimization path and the final balance between global (DCD, EMD) and local (CD, F-Score) performance metrics. (i) Preset Adaptive Weighting, which follows human-designed schedules to gradually shift emphasis; and (ii) Uncertainty Weighting, which adapts weights automatically based on homoscedastic task uncertainty. We further initialize the uncertainty-based scheme with \((\beta,\alpha) = (\theta, \tau)\) to bias the optimization toward global uniformity at the start, after which the weights evolve data-dependently.

\begin{figure*}[t]
    \centering
    \includegraphics[width=\linewidth]{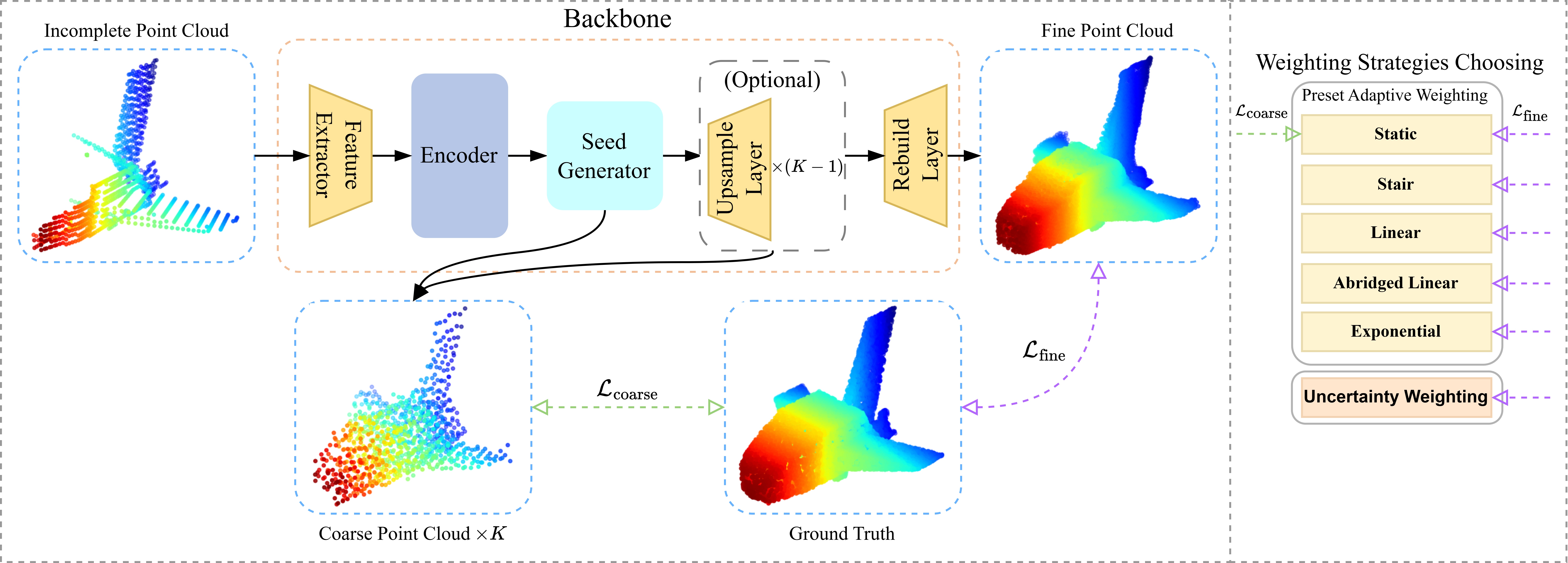}
    \caption{Architecture of the point cloud completion network integrating Flexible-weighted Chamfer Distance (FCD). The network follows a coarse-to-fine paradigm, producing both coarse and fine outputs. FCD is applied at multiple stages (see Algorithm~\ref{alg:fcd_training} for training schedule): static weighting is used at the coarse stage to establish global structure, while various adaptive weighting schemes are employed at the fine stage to balance local detail refinement and global coverage.
    }
    \label{fig:frame}
\end{figure*}

\subsubsection{Preset Adaptive Weighting}
This family of strategies investigates how predefined schedules for FCD's weights impact the performance trade-off. The primary objective is to secure strong global performance, while simultaneously investigating if these relative weight changes can maintain or even improve local performance. In all preset methods, we set an upper limit \(\theta\) and a lower limit \(\tau\). The local objective weight \(\alpha\) is held constant at the lower limit for all schedules, i.e., \(\alpha = \tau\) for all epochs. The global objective weight \(\beta\) is then manipulated. With the exception of the static baseline, these schedules are designed to gradually evolve the global weight \(\beta\) from the upper limit \(\theta\) towards the lower limit \(\tau\). We propose five distinct schedules, illustrated in Fig.~\ref{fig:weighting}, to manage the evolution of $\beta$ during training. The five schedules are described as follows:

\paragraph{Static schedule}
Treat the global objective as primary and the local objective as secondary: set \(\alpha=\tau\) and \(\beta=\theta\) for all epochs.

\paragraph{Stair schedule}
Emphasize global structure early (\(\beta=\theta\)), then switch to \(\beta=\tau\) at epoch \(\epsilon=t\).

\paragraph{Linear schedule}
This schedule linearly reduces the global weight from \(\theta\) to \(\tau\) over the total training duration of \(T\) epochs. The weight at epoch \(\epsilon\) is \(\beta_{\epsilon} = \theta - \frac{\epsilon}{T}(\theta - \tau)\).

\paragraph{Abridged Linear schedule}
This schedule holds \(\beta=\theta\) until epoch \(\epsilon=t\), and then begins a linear decay, reaching \(\tau\) at the final epoch \(T\).

\paragraph{Exponential schedule}
This schedule exponentially decays \(\beta\) from \(\theta\) towards \(\tau\). The weight at epoch \(\epsilon\) is given by \(\beta_{\epsilon} = (\theta - \tau) e^{-\frac{\epsilon}{\sigma}} + \tau\), where \(\sigma\) controls the decay rate.

\subsubsection{Uncertainty Weighting}

Kendall \textit{et al.} \cite{kendall2018multi} proposed treating multi-task learning as tasks with equal variances, where task-related weights are automatically adjusted based on task uncertainty. This approach derives the multi-task objective function by maximizing the Gaussian likelihood function. The task weights are adjusted automatically during training based on changes in the loss values and are negatively correlated with the variance of tasks, meaning that tasks with higher uncertainty have smaller weights. However, in this method, the importance of different tasks is considered equal. To make the network focus more on the global distribution, we slightly modified the uncertainty weighting approach. Specifically, we assigned initial weights of \(\theta\) and \(\tau\) to the global and local objectives, respectively (consistent with the Preset Adaptive Weighting method), before allowing them to adjust automatically based on homoscedastic uncertainty during training.

\subsection{Application as an Objective Function}

\begin{algorithm}[t]
\caption{End-to-end Training with Flexible-weighted Chamfer Distance (FCD)}
\label{alg:fcd_training}
\begin{algorithmic}[1]
\REQUIRE Incomplete cloud \(\mathcal{P}_{\text{in}}\), ground truth \(\mathcal{G}\); stages \(K\); epochs \(T\); distance order \(r\!\in\!\{1,2\}\) (\(\ell_1\)/\(\ell_2\)); preset bounds \((\tau,\theta)\) with \(\theta>\tau\); weighting scheduler \(\mathcal{W}(\epsilon)\) for the fine stage; network parameters \(\Theta\)
\ENSURE Trained parameters \(\Theta^\star\)
\FOR{\(\epsilon=1\) to \(T\)}
    \STATE \(\mathbf{F} \gets \text{Encoder}(\mathcal{P}_{\text{in}};\Theta)\)
    \STATE \(\{\mathcal{P}_\text{coarse}^{(k)}\}_{k=1}^{K} \gets \text{SeedGen/Up}(\mathbf{F};\Theta)\) \COMMENT{produce \(K\) coarse clouds}
    \STATE \(\mathcal{P}_\text{fine} \gets \text{Rebuild}(\mathcal{P}_\text{coarse}^{(K)};\Theta)\) \COMMENT{final fine cloud}
    \STATE \textbf{Coarse-stage weighting (static):} \((\alpha_\text{coarse},\beta_\text{coarse})\!\leftarrow\!(\tau,\theta)\)
    \STATE \(\mathcal{L}_{\mathrm{coarse}} \gets \sum_{k=1}^{K} d_{\mathrm{FCD}}\!\big(\mathcal{P}_\text{coarse}^{(k)},\mathcal{G};\alpha_\text{coarse},\beta_\text{coarse},r\big)\)
    \STATE \textbf{Fine-stage weighting (scheduled):} \((\alpha_\text{fine},\beta_\text{fine})\!\leftarrow\!\mathcal{W}(\epsilon)\)
    \STATE \(\mathcal{L}_{\mathrm{fine}} \gets d_{\mathrm{FCD}}\!\big(\mathcal{P}_\text{fine},\mathcal{G};\alpha_\text{fine},\beta_\text{fine},r\big)\)
    \STATE \(\mathcal{L}_{\mathrm{total}} \gets \mathcal{L}_{\mathrm{coarse}} + \mathcal{L}_{\mathrm{fine}}\)
    \STATE \(\Theta \leftarrow \Theta - \eta \nabla_{\Theta}\mathcal{L}_{\mathrm{total}}\)  \COMMENT{any optimizer}
\ENDFOR
\STATE \RETURN \(\Theta^\star\)
\end{algorithmic}
\end{algorithm}

Existing point cloud completion networks often adopt a "coarse-to-fine" multi-stage architecture \cite{yuan2018pcn,xiang2021snowflakenet,tang2022lake,zhou2022seedformer,yu2023adapointr,wei2025pcdreamer}. As shown in Fig.~\ref{fig:frame}, we can seamlessly integrate the FCD into these end-to-end pipelines. An incomplete input point cloud \(\mathcal{P}_{\text{in}}\) is passed through an encoder and a seed generator (supplemented by upsampling/reconstruction modules) to sequentially produce a series of coarse-stage point clouds \(\{\mathcal{P}_\text{coarse}^{(k)}\}_{k=1}^{K}\) and a final fine point cloud \(\mathcal{P}_\text{fine}\). To uniformly supervise all stages, we minimize the following in a single forward-backward pass:
\begin{equation}
\mathcal{L}_{\text{total}} = \mathcal{L}_{\text{coarse}} + \mathcal{L}_{\text{fine}},
\end{equation}
\begin{equation}
\mathcal{L}_{\text{coarse}}=\sum_{k=1}^{K} d_{\text{FCD}}\!\big(\mathcal{P}_\text{coarse}^{(k)},\mathcal{G};\alpha_\text{coarse},\beta_\text{coarse}\big),
\end{equation}
\begin{equation}
\mathcal{L}_{\text{fine}}=d_{\text{FCD}}\!\big(\mathcal{P}_\text{fine},\mathcal{G};\alpha_\text{fine},\beta_\text{fine}\big),
\end{equation}
where \(d_{\text{FCD}}(\mathcal{P},\mathcal{G};\alpha,\beta)\!=\!\alpha\,d_{\text{CD}_\text{{local}}}\!\!+\!\beta\,d_{\text{CD}_{\text{global}}}\), with the two terms corresponding to the local and global components from Section~\ref{para:fcd-explain}. The distance can be either Euclidean or squared Euclidean. 
The weighting strategy is consistent with Section~\ref{para:weighting-strategy}: the coarse stages adopt the static scheme to prioritize establishing the global structure, with fixed weights \(\alpha_\text{coarse}=\tau, \beta_\text{coarse}=\theta\ (\theta>\tau)\). To achieve a better trade-off in the fine stage, we use one of the weighting strategies from Fig.~\ref{fig:frame}, \((\alpha_\text{fine},\beta_\text{fine})=\mathcal{W}(\epsilon)\in\{\text{Static,Stair,Linear Abridged Linear,Exponential,Uncertainty}\}\). The Uncertainty strategy is initialized with \((\tau,\theta)\) and adapts during training. If a baseline model contains only a single coarse output, it is treated as a degenerate case with \(K=1\). The complete procedure is detailed in Algorithm~\ref{alg:fcd_training}.
\section{Experiments}
\label{results}

\begin{figure}[t]
    \centering
    \includegraphics[width=\linewidth]{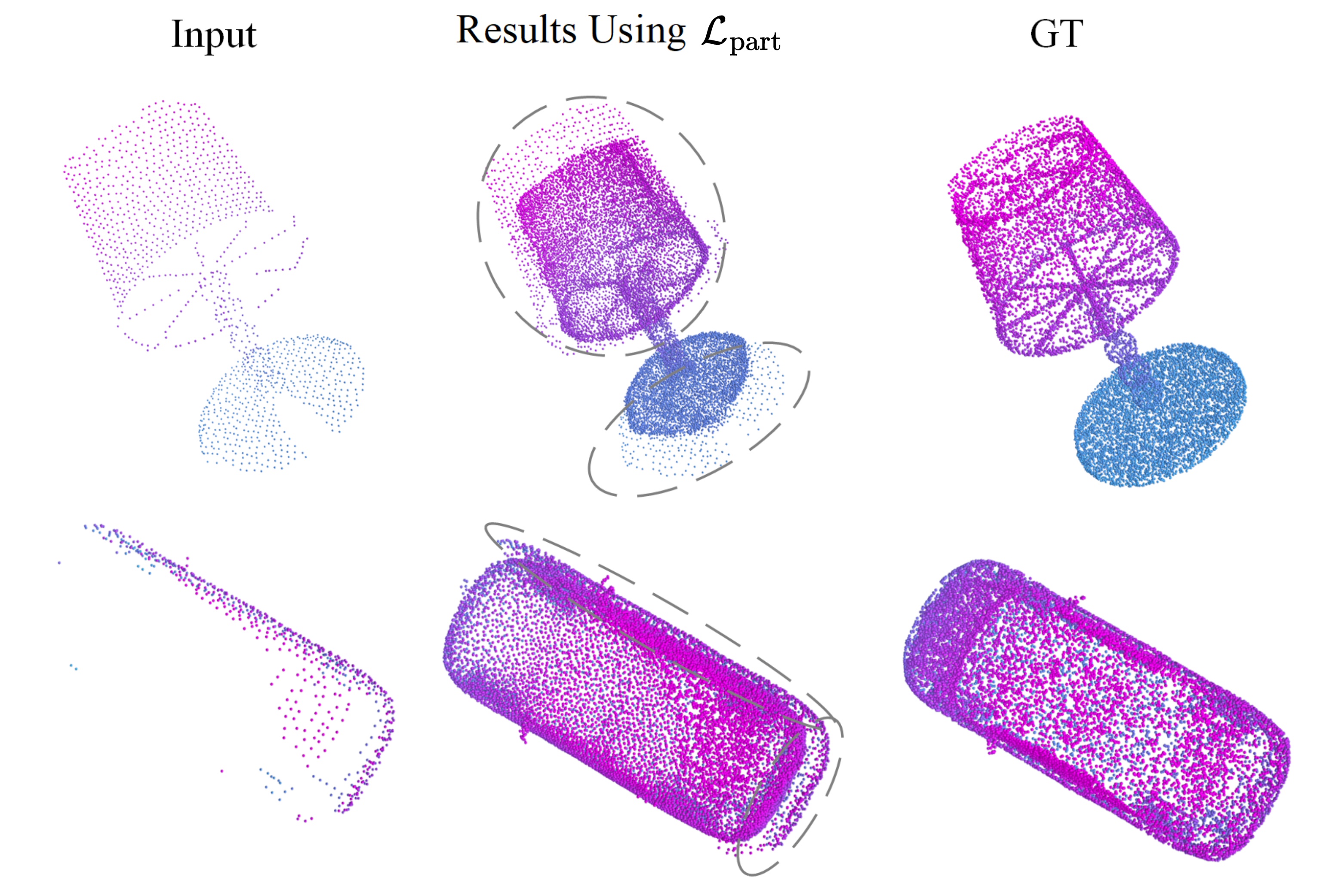}
    \caption{Visualization of 'ghosting' artifacts (dashed circles) caused by the \(\mathcal{L}_{\text{part}}\) loss term, justifying its exclusion.}
    \label{fig:ghost}
\end{figure}

To comprehensively evaluate the effectiveness of our proposed FCD as an objective function, we conducted extensive experiments. This section first introduces the experimental setup, including the datasets, evaluation metrics, and implementation details. Subsequently, we present the main results on mainstream benchmarks. We further substantiate our findings through ablation studies on weighting strategies, qualitative visualizations and several generalization experiments. Finally, we analyze the failure case and computational complexity.

\subsection{Experimental Setup}

\subsubsection{Datasets}
The \textbf{ShapeNet55 dataset} \cite{yu2021pointr}, derived from ShapeNet \cite{chang2015shapenet}, includes all 55 categories. It contains 41,952 shapes for training and 10,518 for testing. Complete point clouds contain 8,192 points, and input point clouds contain 2,048 points. Following~\cite{yu2021pointr}, we select 8 fixed viewpoints and set the number of incomplete points to 2,048, 4,096, and 6,144 (25\%, 50\%, and 75\% of the complete cloud), representing easy, medium, and hard difficulty levels during testing.
The \textbf{PCN dataset} \cite{yuan2018pcn} is a widely-used completion benchmark, also derived from ShapeNet, including 8 categories. We follow the original PCN split, using 28,974 samples for training and 1,200 for testing. The complete point cloud (16,384 points) is sampled from the mesh surface, and the incomplete input is generated via back-projection from 8 different viewpoints.
The \textbf{KITTI dataset} \cite{geiger2013kitti} comprises real-world car point clouds extracted from outdoor LiDAR scans using 3D bounding boxes, totaling 2,401 sparse objects. Unlike synthetic datasets, KITTI scans can be highly sparse and lack ground truth. It is primarily used to evaluate model generalization to sparse, real-world data in an unsupervised setting.
The \textbf{ABC dataset} \cite{koch2019abc} is a large-scale Computer-Aided Design (CAD) model dataset with over one million models from real engineering projects. These models feature complex geometric structures and precise surfaces. We selected 5 engineering workpieces to validate FCD in industrial application scenarios. We generated incomplete data by simulating real-world acquisition processes. We generated 2,000 training and 100 test samples per workpiece. The complete cloud has 16,384 points, and the incomplete input has 2,048 points.
The \textbf{PU-GAN dataset} \cite{li2019pu} is commonly used for point cloud upsampling, containing 147 3D models (120 for training, 27 for testing). For each training model, 24,000 pairs are generated using Poisson disk sampling, with an input of 256 points and a ground truth of 1,024 points. Each test model provides one input/ground-truth pair. Test inputs have 2,048 points, while the ground truth has 8,192 or 32,768 points (4x and 16x upsampling).

\subsubsection{Evaluation Metrics}
We use a multi-dimensional evaluation, as a single metric cannot fully capture the quality of generated point clouds. For supervised point cloud completion (ShapeNet, PCN, ABC), we employ CD to mainly measure local detail fitting accuracy; EMD to assess overall distributional similarity; DCD to quantify global uniformity; and F-Score to consider precision and recall. For the ABC dataset, we also introduce Point-to-Mesh distance (P2F)~\cite{berger2013benchmark} to evaluate fidelity to the original CAD surfaces. For unsupervised completion on KITTI, which lacks ground truth, we adopt Fidelity~\cite{yuan2018pcn} (consistency with the partial input), Minimum Matching Distance~\cite{yuan2018pcn} (MMD, assessing shape similarity), and Consistency~\cite{yuan2018pcn} (quantifying structural plausibility). For the upsampling task (PU-GAN), we incorporate Hausdorff Distance (HD)~\cite{li2019pu}, which measures the maximum mismatch, in addition to the metrics above.

\subsubsection{Implementation Details}

We validate FCD on two state-of-the-art completion networks, AdaPoinTr \cite{yu2023adapointr} and SeedFormer \cite{zhou2022seedformer}, and one upsampling network, RepKPU \cite{rong2024repkpu}. All experiments were conducted on servers equipped with two NVIDIA 3090 (24G) or two NVIDIA 4090 (24G) GPUs. We adhere to the original training parameters whenever possible. 
In all FCD experiments, the weight bounds are \(\theta=2\) and \(\tau=1\), the transition step is \(t=200\), and the exponential decay rate is \(\sigma=200\).
For \textbf{AdaPoinTr}, we use the AdamW \cite{loshchilov2018fixing} optimizer with an initial learning rate of 0.0001 and weight decay of 0.0005. The learning rate is decayed by 0.9 every 21 epochs. We train for 600 epochs on ShapeNet55 (batch size 48) and 400 epochs on PCN (batch size 16). 
For \textbf{SeedFormer}, we use the Adam \cite{kingma2014adam} optimizer with an initial learning rate of 0.001, decayed by 0.1 every 100 epochs. We train for 400 epochs on both PCN (batch size 16) and ShapeNet55 (batch size 48). Notably, the original SeedFormer paper used an additional partial matching (PM) loss term \(\mathcal{L}_{\text{part}}\) (matching the input partial cloud to the prediction). Our experiments found this term can harm performance on simpler datasets (Table~\ref{tab:seedformer-pcn}) and is prone to ghosting artifacts (Fig.~\ref{fig:ghost}). Therefore, to ensure a fair comparison of the core objective functions, we excluded it from all experiments. 
And for \textbf{RepKPU}, we also use the Adam optimizer with an initial learning rate of 0.001, gradually decayed to 0.0001. We train for 100 epochs with a batch size of 32.

\begin{figure*}[t]
    \centering
    \includegraphics[width=\linewidth]{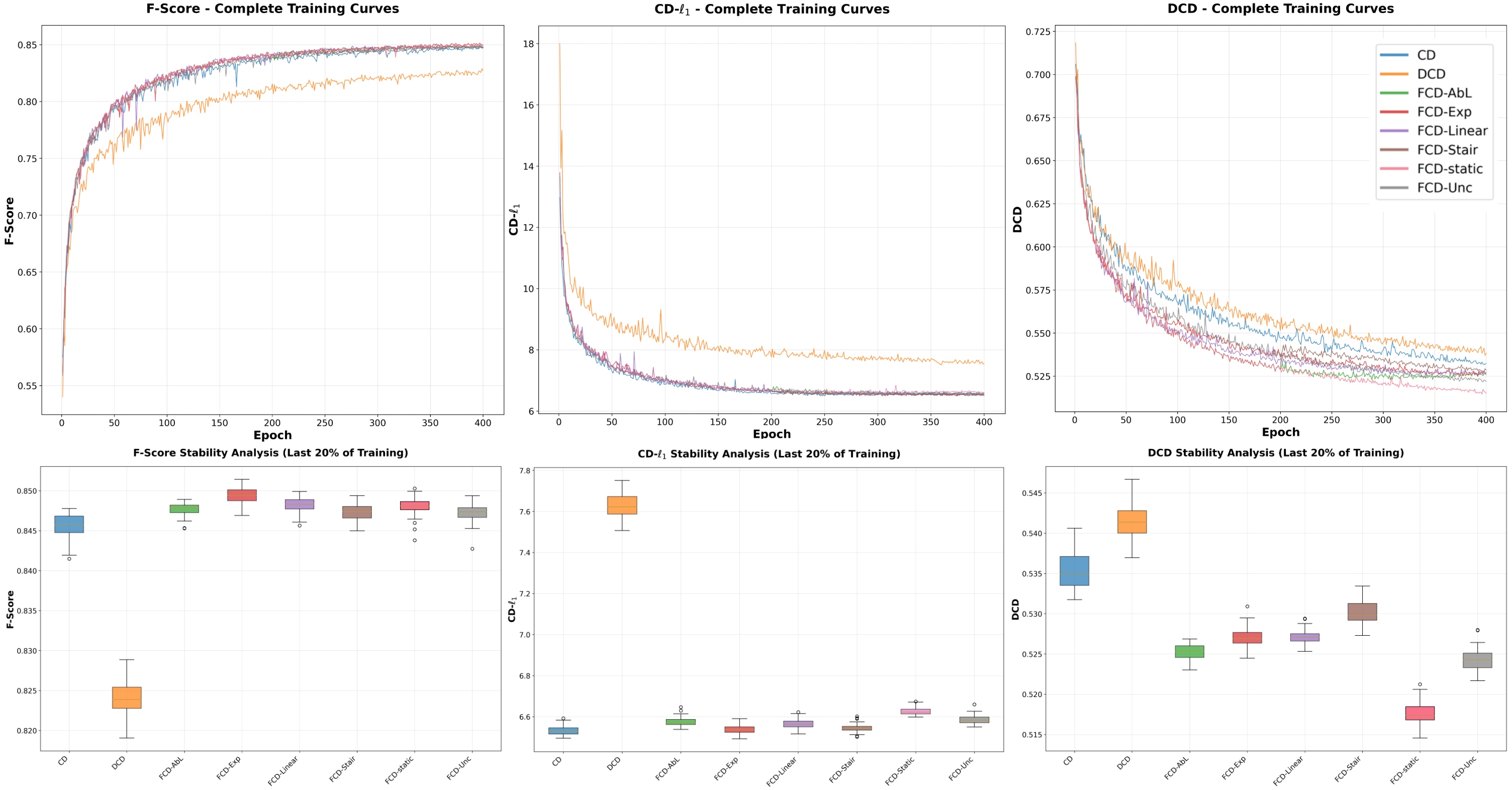}
    \caption{Training dynamics (F-Score/CD-\(\ell_1\)/DCD) and stability (box plots) for AdaPoinTr on PCN. Top: FCD variants achieve superior DCD and F-Score curves. Bottom: Box plots show FCD's better median convergence and lower variance .}
    \label{fig:loss-comparision}
\end{figure*}

\subsection{Effectiveness on ShapeNet55 Dataset}
We first validate FCD's effectiveness on the large-scale ShapeNet55 benchmark. Experiments in this section utilize the static FCD (Static) (\(\alpha=1, \beta=2\)) and compare it against baseline models trained with standard \(\text{CD-}\ell_2\) (\(\alpha=1, \beta=1\)). Following \cite{yu2023adapointr,zhou2022seedformer}, all experiments use squared Euclidean distance (\(\ell_2\) version). We omit the additional \(\mathcal{L}_\text{part}\) loss term introduced in \cite{zhou2022seedformer}. Consequently, our reproduced baseline results may differ from those in the original paper. All results are reported as the mean and standard deviation over three independent trials.

\subsubsection{Results on AdaPoinTr}
We strictly follow the original AdaPoinTr setup, only replacing the \(\text{CD-}\ell_2\) loss with static FCD. As shown in Table~\ref{tab:adapointr-shapenet}, FCD demonstrates a clear advantage in optimizing global structure. The most significant improvement is in global distribution quality: FCD achieves a substantial reduction in the average DCD metric (from 0.613 to 0.537), indicating FCD guides the model to generate more uniform and structurally complete point clouds. This global enhancement is achieved without significantly compromising local fidelity. Although the EMD metric indicates a slight trade-off (28.983 \(\rightarrow\) 29.253), other key local metrics, such as the F-Score, show consistent improvement across all difficulty levels. Furthermore, FCD exhibits superior optimization stability: the average standard deviations for DCD and F-Score are reduced by nearly an order of magnitude (e.g., DCD std: 0.043 \(\rightarrow\) 0.006), confirming FCD provides a more stable optimization gradient.

\subsubsection{Results on SeedFormer}
We conducted parallel experiments on the SeedFormer architecture. As shown in Table~\ref{tab:seedformer-shapenet}, the results further corroborate the generalizability of FCD. Consistent with findings on AdaPoinTr, FCD on SeedFormer also shows a significant DCD advantage, outperforming the baseline at all difficulty levels (Avg. DCD: 0.596 \(\rightarrow\) 0.528). This again confirms FCD's effectiveness in mitigating point clustering and structural voids. However, on this architecture, the improvement in global uniformity is accompanied by a clearer metric trade-off: we observe slight performance drops in CD-\(\ell_2\), EMD, and F-Score (e.g., Avg. F-Score: 0.412 \(\rightarrow\) 0.403). Despite this trade-off, FCD's ability to consistently and significantly improve the DCD metric across two mainstream networks strongly validates its efficacy as a plug-and-play objective function for addressing the bottleneck of global distribution quality.

\begin{table}[t]
\centering
\caption{RESULTS OF ADAPOINTR ON SHAPENET55 DATASET}
\resizebox{\linewidth}{!}{%
\begin{threeparttable}
\begin{tabular}{@{}cl|llll@{}}
\toprule
\multicolumn{2}{c|}{}                                               & CD-\(\ell_2\)↓               & EMD↓                  & DCD↓                 & F1↑                  \\ \midrule
\multicolumn{1}{c|}{\multirow{4}{*}{AdaPoinTr+CD}}         & Easy   & \textbf{0.492±\sd{0.015}} & 27.324±\sd{1.100}          & 0.588±\sd{0.050}          & 0.427±\sd{0.020}          \\
\multicolumn{1}{c|}{}                                      & Median & \textbf{0.697±\sd{0.013}} & 27.860±\sd{0.341}          & 0.601±\sd{0.048}          & 0.411±\sd{0.020}          \\
\multicolumn{1}{c|}{}                                      & Hard   & \textbf{1.258±\sd{0.034}} & \textbf{31.763±\sd{2.066}} & 0.652±\sd{0.032}          & 0.369±\sd{0.015}          \\
\multicolumn{1}{c|}{}                                      & Avg.   & \textbf{0.816±\sd{0.019}} & \textbf{28.983±\sd{0.221}} & 0.613±\sd{0.043}          & 0.402±\sd{0.018}          \\ \midrule
\multicolumn{1}{c|}{\multirow{4}{*}{AdaPoinTr+FCD (Static)}} & Easy   & 0.500±\sd{0.003}          & \textbf{25.197±\sd{0.309}}* & \textbf{0.502±\sd{0.005}}* & \textbf{0.429±\sd{0.002}} \\
\multicolumn{1}{c|}{}                                      & Median & 0.716±\sd{0.008}          & \textbf{26.912±\sd{0.470}}* & \textbf{0.517±\sd{0.007}}* & \textbf{0.415±\sd{0.002}} \\
\multicolumn{1}{c|}{}                                      & Hard   & 1.286±\sd{0.012}          & 35.651±\sd{0.910}          & \textbf{0.592±\sd{0.007}}* & \textbf{0.384±\sd{0.003}} \\
\multicolumn{1}{c|}{}                                      & Avg.   & 0.834±\sd{0.007}          & 29.253±\sd{0.548}          & \textbf{0.537±\sd{0.006}}* & \textbf{0.409±\sd{0.002}} \\ \bottomrule
\end{tabular}
\begin{tablenotes}
\footnotesize
\item Overall results on 55 categories for three difficulty degrees under CD-\(\ell_2\) (\(\times 10^3\)), EMD (\(\times 10^3\)), DCD and F-Score@1\%. We report Mean ± Std Dev over 3 runs. * denotes a statistically significant improvement over the CD baseline (\(p < 0.05\), one-tailed paired t-test).
\end{tablenotes}
\end{threeparttable}
}

\label{tab:adapointr-shapenet}
\end{table}

\begin{table}[t]
\centering
\caption{RESULTS OF SEEDFOMER ON SHAPENET55 DATASET}
\resizebox{\linewidth}{!}{%
\begin{threeparttable}
\begin{tabular}{@{}cl|llll@{}}
\toprule
\multicolumn{2}{c|}{}                                                  & \multicolumn{1}{c}{CD-\(\ell_2\)↓} & \multicolumn{1}{c}{EMD↓} & \multicolumn{1}{c}{DCD↓} & \multicolumn{1}{c}{F1↑} \\ \midrule
\multicolumn{1}{c|}{\multirow{4}{*}{SeedFormer+CD}}           & Easy   & \textbf{0.480\sd{±0.007}}       & 24.633\sd{±0.370}             & 0.559\sd{±0.001}              & \textbf{0.457\sd{±0.002}}    \\
\multicolumn{1}{c|}{}                                         & Median & \textbf{0.717\sd{±0.008}}       & \textbf{25.528\sd{±0.439}}    & 0.584\sd{±0.003}              & \textbf{0.421\sd{±0.003}}    \\
\multicolumn{1}{c|}{}                                         & Hard   & \textbf{1.366\sd{±0.009}}       & \textbf{33.132\sd{±0.303}}    & 0.643\sd{±0.003}              & \textbf{0.359\sd{±0.002}}    \\
\multicolumn{1}{c|}{}                                         & Avg.   & \textbf{0.854\sd{±0.008}}       & \textbf{27.765\sd{±0.361}}    & 0.596\sd{±0.002}              & \textbf{0.412\sd{±0.002}}    \\ \midrule
\multicolumn{1}{c|}{\multirow{4}{*}{SeedFormer+FCD (Static)}} & Easy   & 0.504\sd{±0.001}                & \textbf{23.192\sd{±0.198}}*    & \textbf{0.481\sd{±0.001}}*      & 0.449\sd{±0.002}             \\
\multicolumn{1}{c|}{}                                         & Median & 0.764\sd{±0.002}                & 25.889\sd{±0.372}             & \textbf{0.515\sd{±0.003}}*     & 0.411\sd{±0.002}             \\
\multicolumn{1}{c|}{}                                         & Hard   & 1.451\sd{±0.012}                & 38.067\sd{±0.448}             & \textbf{0.587\sd{±0.004}}*     & 0.348\sd{±0.001}             \\
\multicolumn{1}{c|}{}                                         & Avg.   & 0.906\sd{±0.005}                & 29.050\sd{±0.267}             & \textbf{0.528\sd{±0.003}}*     & 0.403\sd{±0.002}             \\ \bottomrule
\end{tabular}
\begin{tablenotes}
\footnotesize
\item Overall results on 55 categories for three difficulty degrees under CD-\(\ell_2\) (\(\times 10^3\)), EMD (\(\times 10^3\)), DCD and F-Score@1\%. We report Mean ± Std Dev over 3 runs. * denotes a statistically significant improvement over the CD baseline (\(p < 0.05\), one-tailed paired t-test).
\end{tablenotes}
\end{threeparttable}
}
\label{tab:seedformer-shapenet}
\end{table}

\subsection{Ablation Study on PCN Dataset}

\begin{figure*}[ht]
    \centering
    \includegraphics[width=0.95\linewidth]{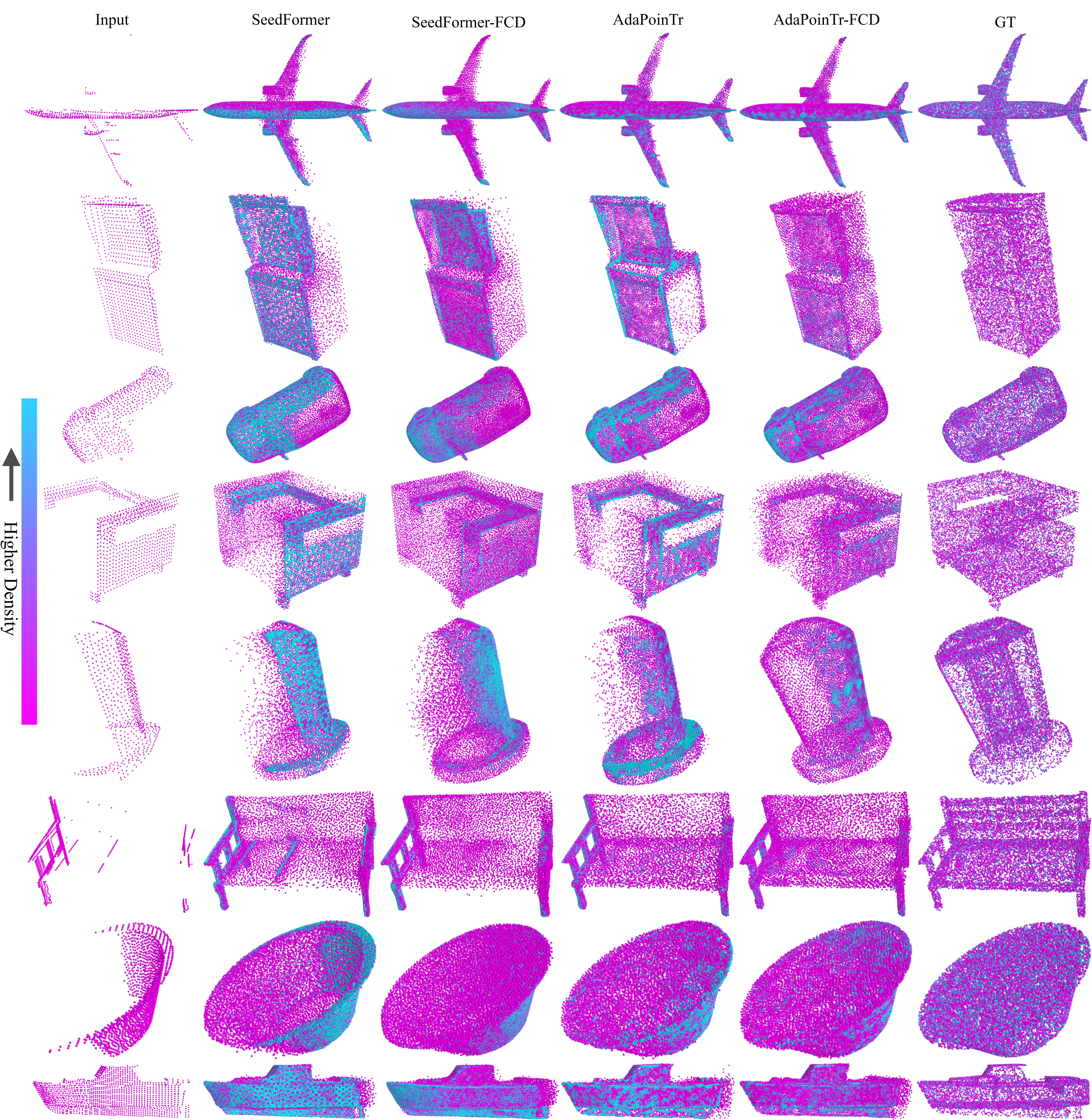}
    \caption{Visual results on PCN dataset. Points are colored based on local density, ranging from purple (low density) to blue (high density). The results show that FCD-guided methods (SeedFormer-FCD, AdaPoinTr-FCD) generate globally more uniform point clouds and effectively mitigate the local clustering artifacts (indicated by blue regions) observed in the standard CD baseline methods.}
    \label{fig:pcn-res}
\end{figure*}

\label{quanti}
We conducted ablation studies on the PCN dataset with two primary goals: first, to validate that the core FCD principle (\(\beta > \alpha\)) consistently yields improvements in global distribution metrics (e.g., DCD and EMD) compared to the standard CD baseline; and second, to analyze how these different weighting strategies navigate the resulting trade-off between these global improvements and their impact on local metrics (eg., CD, F-Score). We also include DCD as an objective function for comparison. All CD and FCD experiments in this section use Euclidean distance (\(\ell_1\) version). All results are reported as the mean and standard deviation over three trials. 

\subsubsection{Results on AdaPoinTr}
\label{para:ada-pcn}
Table~\ref{tab:adapointr-pcn} summarizes the performance comparison. Using DCD as a standalone loss fails to yield balanced improvements, significantly degrading F-Score. In contrast, all FCD variants consistently outperform the CD baseline on global metrics (EMD and DCD), validating our first objective. This global enhancement, however, necessitates navigating a trade-off with the \(\text{CD-}\ell_1\) metric, highlighting our second objective. The preset Static strategy exemplifies an aggressive global optimization, achieving the largest reductions in EMD (23.790 \(\rightarrow\) 21.395) and DCD (0.533 \(\rightarrow\) 0.514). While this incurs a slight \(\text{CD-}\ell_1\) cost (6.509 \(\rightarrow\) 6.591), it notably secures the best F-Score (0.850), suggesting a superior local optimization. Conversely, adaptive strategies like Linear and Stair represent a more conservative balance, improving global metrics while keeping \(\text{CD-}\ell_1\) scores (e.g., Linear: 6.538) much closer to the baseline. Notably, Uncertainty Weighting, requiring no manual schedule, also achieves highly competitive results (DCD: 0.520, EMD: 22.102), second only to the Static strategy. Fig.~\ref{fig:loss-comparision} visualizes this: FCD training curves (top) are consistently below the CD baseline for DCD and above for F-Score. Furthermore, box plots (bottom) for the last 20\% of epochs show FCD variants have better median values and more compact distributions, validating their superior convergence and stability.

\begin{table*}[t]
\centering
\caption{RESULTS OF ADAPOINTR ON PCN DATASET}
\resizebox{\textwidth}{!}{%
\begin{threeparttable}
\begin{tabular}{@{}ccl|cccccccc|llll@{}}
\toprule
\multicolumn{3}{c|}{}                                                                                                                     & Plane       & Cabinet     & Car         & Chair       & Lamp        & Couch       & Table       & Boat        & DCD↓                 & CD-\(\ell_1\)↓               & EMD↓                  & F-Score@ 1\%↑        \\ \midrule
\multicolumn{3}{c|}{AdaPoinTr + CD}                                                                                                       & 0.504\sd{±0.004} & 0.546\sd{±0.003} & 0.560\sd{±0.003} & 0.514\sd{±0.005} & 0.544\sd{±0.007} & 0.568\sd{±0.006} & 0.472\sd{±0.003} & 0.559\sd{±0.003} & 0.533\sd{±0.003}          & \textbf{6.509\sd{±0.019}} & 23.790\sd{±0.294}          & 0.847\sd{±0.002}          \\
\multicolumn{3}{c|}{AdaPoinTr + DCD}                                                                                                      & 0.519\sd{±0.001} & 0.542\sd{±0.006} & 0.550\sd{±0.008} & 0.526\sd{±0.003} & 0.557\sd{±0.009} & 0.567\sd{±0.009} & 0.482\sd{±0.002} & 0.565\sd{±0.003} & 0.538\sd{±0.002}          & 7.480\sd{±0.150}          & 22.331\sd{±1.498}          & 0.827\sd{±0.002}          \\ \midrule
\multicolumn{1}{c|}{\multirow{6}{*}{AdaPoinTr + FCD}} & \multicolumn{1}{c|}{\multirow{5}{*}{Preset Adaptive Weighting}} & Static          & 0.487\sd{±0.001} & 0.527\sd{±0.001} & 0.536\sd{±0.001} & 0.500\sd{±0.002} & 0.522\sd{±0.003} & 0.544\sd{±0.002} & 0.461\sd{±0.001} & 0.536\sd{±0.002} & \textbf{0.514\sd{±0.001}}* & 6.591\sd{±0.011}      & \textbf{21.395\sd{±0.190}}* & \textbf{0.850\sd{±0.001}}* \\
\multicolumn{1}{c|}{}                                 & \multicolumn{1}{c|}{}                                           & Stair           & 0.500\sd{±0.002} & 0.540\sd{±0.002} & 0.553\sd{±0.002} & 0.511\sd{±0.002} & 0.533\sd{±0.002} & 0.561\sd{±0.002} & 0.469\sd{±0.002} & 0.551\sd{±0.001} & 0.527\sd{±0.001}*          & 6.526\sd{±0.023}          & 23.093\sd{±0.214}          & 0.849\sd{±0.002}*          \\
\multicolumn{1}{c|}{}                                 & \multicolumn{1}{c|}{}                                           & Linear          & 0.496\sd{±0.003} & 0.539\sd{±0.001} & 0.551\sd{±0.003} & 0.510\sd{±0.002} & 0.534\sd{±0.003} & 0.558\sd{±0.004} & 0.466\sd{±0.001} & 0.550\sd{±0.001} & 0.526\sd{±0.001}*          & 6.538\sd{±0.022}          & 22.807\sd{±0.182}*          & 0.849\sd{±0.001}          \\
\multicolumn{1}{c|}{}                                 & \multicolumn{1}{c|}{}                                           & Abridged Linear & 0.494\sd{±0.002} & 0.536\sd{±0.001} & 0.548\sd{±0.002} & 0.508\sd{±0.002} & 0.531\sd{±0.004} & 0.556\sd{±0.002} & 0.468\sd{±0.002} & 0.546\sd{±0.002} & 0.523\sd{±0.001}*          & 6.538\sd{±0.018}          & 22.571\sd{±0.230}*          & 0.849\sd{±0.001}          \\
\multicolumn{1}{c|}{}                                 & \multicolumn{1}{c|}{}                                           & Exponential     & 0.495\sd{±0.005} & 0.538\sd{±0.003} & 0.549\sd{±0.003} & 0.509\sd{±0.004} & 0.532\sd{±0.003} & 0.557\sd{±0.003} & 0.467\sd{±0.001} & 0.547\sd{±0.002} & 0.524\sd{±0.002}*          & 6.540\sd{±0.049}          & 22.700\sd{±0.180}*          & 0.849\sd{±0.002}*          \\ \cmidrule(l){2-15} 
\multicolumn{1}{c|}{}                                 & \multicolumn{2}{c|}{Uncertainty   Weighting}                                      & 0.490\sd{±0.002} & 0.532\sd{±0.002} & 0.543\sd{±0.004} & 0.505\sd{±0.002} & 0.532\sd{±0.004} & 0.552\sd{±0.003} & 0.464\sd{±0.003} & 0.541\sd{±0.002} & 0.520\sd{±0.002}*          & 6.608\sd{±0.055}          & 22.102\sd{±0.414}*          & 0.848\sd{±0.002}          \\ \bottomrule
\end{tabular}

\begin{tablenotes}
\footnotesize
\item Detailed DCD results (per-category and overall), and overall results for CD-\(\ell_1\) (\(\times 10^3\)), EMD (\(\times 10^3\)), and F-Score@1\%. We report Mean ± Std Dev over 3 runs. * denotes a statistically significant improvement over the CD baseline (\(p < 0.05\), one-tailed paired t-test).
\end{tablenotes}
\end{threeparttable}

}
\label{tab:adapointr-pcn}
\end{table*}

\subsubsection{Results on SeedFormer}
Table~\ref{tab:seedformer-pcn} reports the ablation results on SeedFormer (using SeedFormer + CD as the baseline). The conclusions are highly consistent with those from AdaPoinTr (Section~\ref{para:ada-pcn}), demonstrating FCD's generalization capability. The preset Static strategy again shows the strongest performance, prioritizing global structure to achieve best-in-class EMD (improving by 13.22\% to 22.797), DCD (0.513), and F-Score (0.837). As expected, this advantage is accompanied by a slight trade-off in the \(\text{CD-}\ell_1\) metric (6.617 \(\rightarrow\) 6.711). The dynamic strategies further clarify this trade-off. Methods like Linear and Exponential showcase an alternative balance, achieving \(\text{CD-}\ell_1\) metrics (e.g., Linear: 6.624) nearly on par with the baseline. This mitigates the local fitting cost while still maintaining clear superiority over CD in all other metrics. This result re-confirms that FCD provides an effective, generalizable optimization and offers a tunable balance between global uniformity and local fitting.

\begin{table*}[t]
\centering
\caption{RESULTS OF SEEDFOMER ON PCN DATASET}
\resizebox{\textwidth}{!}{%
\begin{threeparttable}
\begin{tabular}{@{}ccl|cccccccc|llll@{}}
\toprule
\multicolumn{3}{c|}{}                                                                                                                    & Plane                & Cabinet              & Car                  & Chair                & Lamp                 & Couch                & Table                & Boat                 & DCD↓                 & CD-\(\ell_1\)↓               & EMD↓                  & F-Score@ 1\%↑        \\ \midrule
\multicolumn{3}{c|}{SeedFormer+CD+PM}                                                                                                    & 0.540\sd{±0.007}          & 0.585\sd{±0.002}          & 0.590\sd{±0.002}          & 0.563\sd{±0.005}          & 0.557\sd{±0.005}          & 0.620\sd{±0.004}          & 0.510\sd{±0.003}          & 0.585\sd{±0.006}          & 0.569\sd{±0.003}          & \textbf{6.757\sd{±0.025}} & 28.156\sd{±0.383}          & 0.819\sd{±0.002}          \\
\multicolumn{3}{c|}{SeedFormer+CD}                                                                                                       & 0.511\sd{±0.008}          & 0.547\sd{±0.004}          & 0.561\sd{±0.001}          & 0.537\sd{±0.004}          & 0.536\sd{±0.003}          & 0.597\sd{±0.004}          & 0.469\sd{±0.005}          & 0.567\sd{±0.005}          & 0.540\sd{±0.003}          & \textbf{6.617\sd{±0.007}} & 26.269\sd{±0.529}          & 0.829\sd{±0.001}          \\ \midrule
\multicolumn{1}{c|}{\multirow{6}{*}{SeedFormer+FCD}} & \multicolumn{1}{c|}{\multirow{5}{*}{Preset Adaptive Weighting}} & Static          & \textbf{0.488\sd{±0.002}} & \textbf{0.515\sd{±0.001}} & \textbf{0.530\sd{±0.001}} & \textbf{0.511\sd{±0.000}} & \textbf{0.512\sd{±0.000}} & \textbf{0.560\sd{±0.002}} & \textbf{0.449\sd{±0.001}} & \textbf{0.537\sd{±0.001}} & \textbf{0.513\sd{±0.000}}* & 6.711\sd{±0.024}          & \textbf{22.797\sd{±0.116}}* & \textbf{0.837\sd{±0.000}}* \\
\multicolumn{1}{c|}{}                                & \multicolumn{1}{c|}{}                                           & Stair           & 0.502\sd{±0.003}          & 0.538\sd{±0.007}          & 0.549\sd{±0.002}          & 0.529\sd{±0.001}          & 0.532\sd{±0.004}          & 0.584\sd{±0.002}          & 0.464\sd{±0.003}          & 0.557\sd{±0.003}          & 0.532\sd{±0.003}          & 6.645\sd{±0.012}          & 24.260\sd{±0.237}*          & 0.832\sd{±0.002}          \\
\multicolumn{1}{c|}{}                                & \multicolumn{1}{c|}{}                                           & Linear          & 0.500\sd{±0.002}          & 0.536\sd{±0.003}          & 0.548\sd{±0.001}          & 0.526\sd{±0.001}          & 0.527\sd{±0.001}          & 0.582\sd{±0.001}          & 0.461\sd{±0.000}          & 0.556\sd{±0.002}          & 0.529\sd{±0.001}*          & 6.624\sd{±0.029}          & 24.393\sd{±0.279}*          & 0.833\sd{±0.001}*          \\
\multicolumn{1}{c|}{}                                & \multicolumn{1}{c|}{}                                           & Abridged Linear & 0.501\sd{±0.001}          & 0.530\sd{±0.004}          & 0.547\sd{±0.002}          & 0.526\sd{±0.002}          & 0.528\sd{±0.003}          & 0.583\sd{±0.005}          & 0.460\sd{±0.001}          & 0.554\sd{±0.002}          & 0.529\sd{±0.001}*          & 6.652\sd{±0.012}          & 24.027\sd{±0.061}*          & 0.833\sd{±0.001}*         \\
\multicolumn{1}{c|}{}                                & \multicolumn{1}{c|}{}                                           & Exponential     & 0.498\sd{±0.004}          & 0.535\sd{±0.003}          & 0.546\sd{±0.001}          & 0.524\sd{±0.003}          & 0.525\sd{±0.002}          & 0.582\sd{±0.001}          & 0.460\sd{±0.004}          & 0.554\sd{±0.002}          & 0.528\sd{±0.002}*          & 6.636\sd{±0.052}          & 24.299\sd{±0.177}*          & 0.833\sd{±0.002}*          \\ \cmidrule(l){2-15} 
\multicolumn{1}{c|}{}                                & \multicolumn{2}{c|}{Uncertainty   Weighting}                                      & 0.491\sd{±0.005}          & 0.526\sd{±0.009}          & 0.535\sd{±0.009}          & 0.514\sd{±0.007}          & 0.519\sd{±0.007}          & 0.565\sd{±0.008}          & 0.451\sd{±0.005}          & 0.541\sd{±0.006}          & 0.517\sd{±0.006}*          & 6.726\sd{±0.028}          & 23.390\sd{±1.175}*          & 0.835\sd{±0.003}*          \\ \bottomrule
\end{tabular}

\begin{tablenotes}
\footnotesize
\item Detailed DCD results (per-category and overall), and overall results for CD-\(\ell_1\) (\(\times 10^3\)), EMD (\(\times 10^3\)), and F-Score@1\%. We report Mean ± Std Dev over 3 runs. * denotes a statistically significant improvement over the CD baseline (\(p < 0.05\), one-tailed paired t-test).
\end{tablenotes}
\end{threeparttable}
}
\label{tab:seedformer-pcn}
\end{table*}

\subsection{Qualitative Analysis}
We provide qualitative results for point cloud completion guided by CD and FCD in Fig.~\ref{fig:pcn-res}. To intuitively illustrate the quality improvement from FCD, we color the points based on local density from low (purple/warm) to high (blue/cold). A visual comparison reveals that when using CD, the generated point clouds exhibit apparent local point clustering in some regions, which consequently leads to structural discontinuities. In contrast, the point clouds generated by FCD are globally more uniform and complete, with visual results that better conform to the true geometry. For categories with complex geometric details (e.g., 2nd and 5th rows in Fig.~\ref{fig:pcn-res}), CD is more prone to generating structurally incorrect or incomplete shapes. FCD, by prioritizing the correctness of the global structure, provides a more reliable foundation for subsequent detail recovery, thus producing higher-quality results. An interesting observation is that although AdaPoinTr generally outperforms SeedFormer in quantitative metrics, in the visual results for some categories (e.g., 4th and 7th rows), SeedFormer, which has more intermediate refinement stages, exhibits slightly superior global uniformity under FCD's guidance. This suggests a potential synergistic effect between FCD's optimization and the network's structural design.

\subsection{Generalization and Application}
To examine FCD's generalization capability across different scenarios, tasks, and data domains, we conducted three extension experiments.

\subsubsection{Real-world Scenarios}
To evaluate FCD's generalization and robustness in real-world scenarios, we follow \cite{yu2023adapointr} to finetune our trained model on PCNCars \cite{yuan2018pcn} and evaluate on the KITTI dataset. As shown in Table~\ref{tab:kitti}, AdaPoinTr trained with FCD outperforms the CD baseline on the key Fidelity and Consistency metrics. This indicates the completion results from FCD not only maintain higher fidelity to the original partial input but also possess a more plausible and consistent internal geometry. Furthermore, the visual comparison in Fig.~\ref{fig:kitti} shows that the model guided by standard CD (middle column) tends to produce completions with uneven density, exhibiting more severe local clustering (higher blue intensity). In contrast, the FCD-guided model (right column) generates point clouds that are significantly more uniform in global distribution and more structurally complete, recovering a more reasonable vehicle shape.

\begin{figure}[t]
    \centering
    \includegraphics[width=\linewidth]{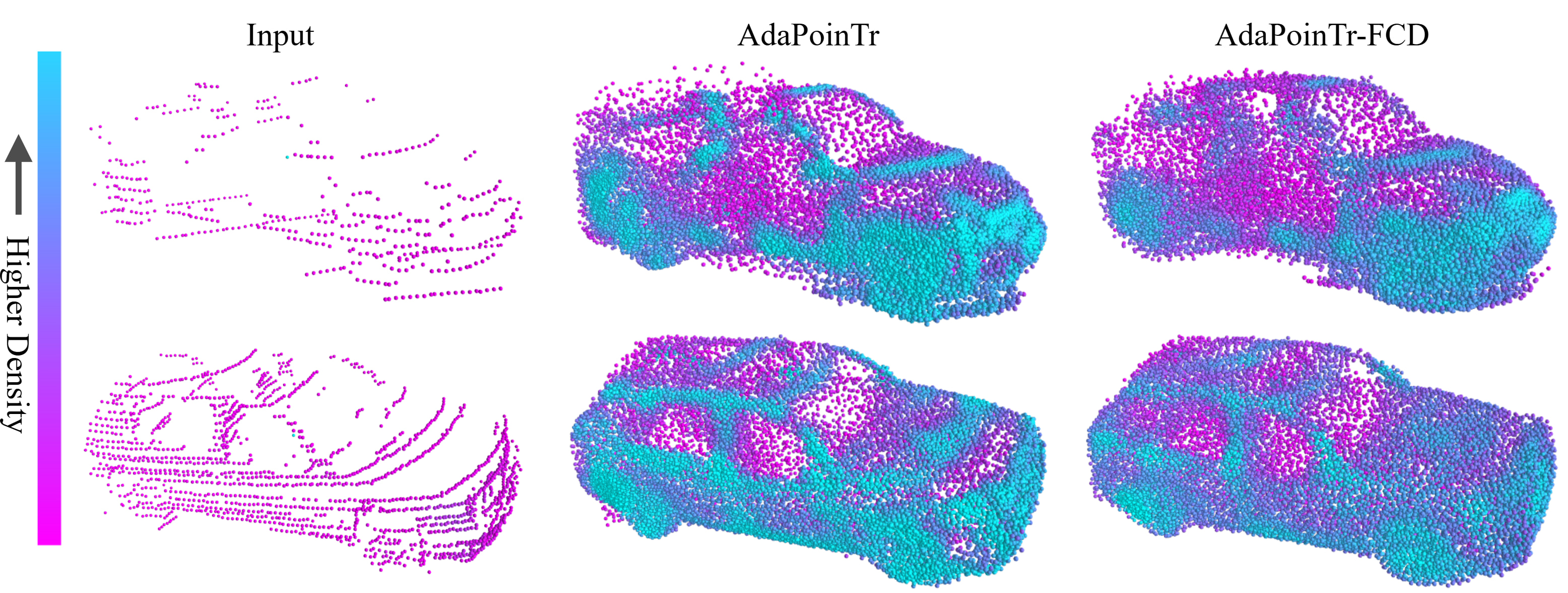}
    \caption{Visual results on the real-world KITTI dataset. Compared to the baseline (AdaPoinTr), the FCD-guided model produces structurally more complete vehicles with a more uniform point distribution, whereas the baseline tends to generate results with uneven density and severe local clustering.}
    \label{fig:kitti}
\end{figure}

\begin{table}[t]
\centering
\caption{RESULTS OF ADAPOINTR ON KITTI DATASET}
\resizebox{0.8\linewidth}{!}{%
\begin{threeparttable}
\begin{tabular}{@{}l|ccc@{}}
\toprule
                       & Fidelity ↓ & MMD ↓ & Consistency↓ \\ \midrule
AdaPoinTr+CD           & 1.439      & \textbf{0.366} & 0.569        \\
AdaPoinTr+FCD (Static) & \textbf{1.409}      & 0.369 & \textbf{0.525}        \\ \bottomrule
\end{tabular}

\begin{tablenotes}
\footnotesize
\item We follow the previous work to finetune the model on PCNCars.
\end{tablenotes}
\end{threeparttable}

}
\label{tab:kitti}
\end{table}

\subsubsection{Engineering Applications}
\label{para:abc}
To assess FCD's effectiveness in industrial application scenarios involving complex topologies and high-precision requirements, we benchmarked on the ABC dataset. These CAD models represent real-world engineering components. As shown in Table~\ref{tab:abc}, FCD again demonstrates its strong capability in improving global distribution. On all five test workpieces, the FCD-trained model not only achieved comprehensive improvements in DCD, EMD, and F-Score but also, surprisingly, in the CD-\(\ell_1\) metric.  We also observed a trade-off in the P2F metric, where CD and FCD had mixed results. This reveals a reasonable trade-off: FCD prioritizes uniform global coverage rather than strictly fitting the nearest ground-truth points (corresponding to the original mesh surface). Fig.~\ref{fig:abc} visualizes the results for the workpieces with the worst (ID 00010073, top row) and best (ID 00010380, bottom row) completions, showing that FCD-guided results are superior to CD in both density distribution and global structure.
\begin{figure}[t]
    \centering
    \includegraphics[width=\linewidth]{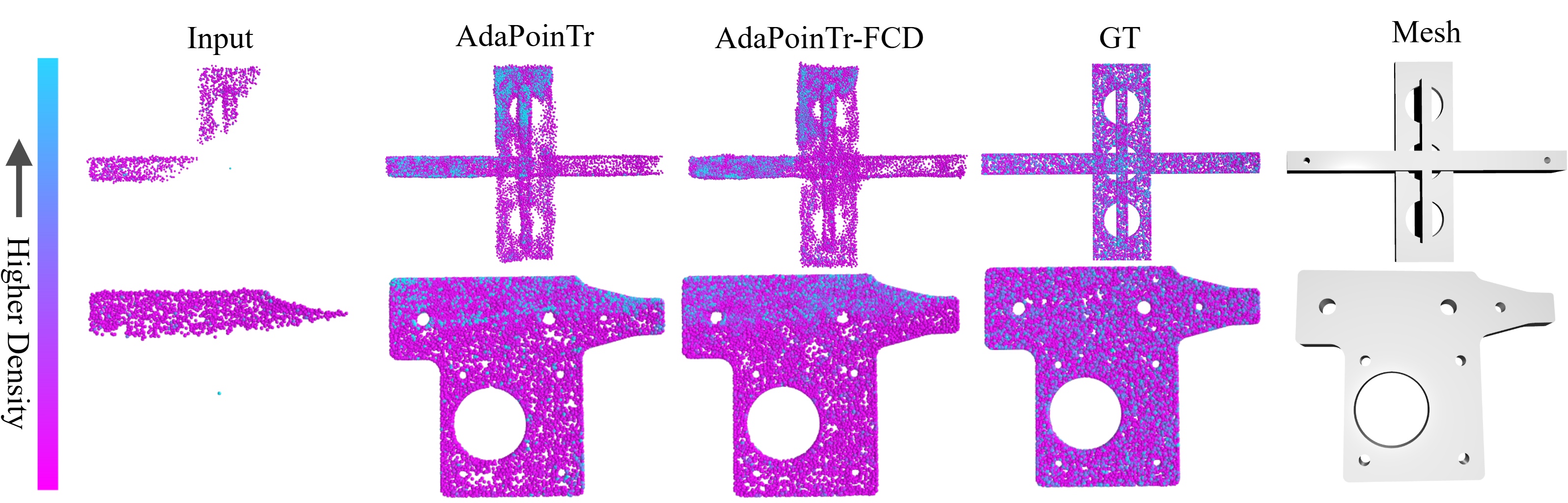}
    \caption{Visual results on the ABC dataset for industrial CAD models. The comparison highlights FCD's capability to reconstruct complex topologies with superior density distribution (bottom row) and global structural integrity (top row) compared to the standard CD baseline.}
    \label{fig:abc}
\end{figure}

\begin{table}[t]
\centering
\caption{RESULTS OF ADAPOINTR ON ABC DATASET}
\resizebox{\linewidth}{!}{%
\begin{threeparttable}
\begin{tabular}{@{}c|c|ccccc@{}}
\toprule
                                        & Workpiece ID & CD-\(\ell_1\)↓         & P2F ↓          & DCD↓           & EMD↓            & F-Score↑       \\ \midrule
\multirow{5}{*}{AdaPoinTr+CD}           & 00010045     & 6.502          & \textbf{1.257} & 0.503          & 27.946          & 0.772          \\
                                        & 00010073     & 2.798          & 0.861          & 0.417          & 18.091          & 0.975          \\
                                        & 00010221     & 5.686          & \textbf{1.035} & 0.492          & \textbf{30.838} & 0.776          \\
                                        & 00010380     & 9.903          & 8.197          & 0.590          & 23.081          & 0.809          \\
                                        & 00010760     & 4.833          & 0.959          & 0.499          & 31.676          & 0.852          \\ \midrule
\multirow{5}{*}{AdaPoinTr+FCD (Static)} & 00010045     & \textbf{5.942} & 1.340          & \textbf{0.441} & \textbf{27.385} & \textbf{0.813} \\
                                        & 00010073     & \textbf{2.523} & \textbf{0.625} & \textbf{0.403} & \textbf{18.011} & \textbf{0.987} \\
                                        & 00010221     & \textbf{5.098} & 1.095          & \textbf{0.425} & 32.303          & \textbf{0.808} \\
                                        & 00010380     & \textbf{9.754} & \textbf{6.819} & \textbf{0.567} & \textbf{20.350} & \textbf{0.824} \\
                                        & 00010760     & \textbf{4.525} & 1.393          & \textbf{0.427} & \textbf{31.371} & \textbf{0.883} \\ \bottomrule
\end{tabular}

\begin{tablenotes}
\footnotesize
\item We report the overall results under CD-\(\ell_1\) (\(\times 10^3\)), P2F (\(\times 10^3\)), EMD (\(\times 10^3\)), DCD and F-Score@1\%.
\end{tablenotes}
\end{threeparttable}
}
\label{tab:abc}
\end{table}

\subsubsection{Point Cloud Upsampling Task}
To further validate FCD's broad applicability as a general objective function, we applied it to the different generation task of point cloud upsampling, experimenting on the RepKPU \cite{rong2024repkpu} network. As shown in Table~\ref{tab:repkpu}, FCD demonstrates consistent performance gains at both 4x and 16x upsampling ratios. Compared to the standard CD baseline, the FCD-trained model achieves improvements across multiple key metrics, including CD-\(\ell_1\), HD, DCD, EMD, and F-Score. Consistent with findings on the ABC dataset (Section~\ref{para:abc}), we again observe a slight trade-off in the P2F metric. Fig.~\ref{fig:repkpu} provides visualizations for 16x upsampling. The density distributions clearly show that RepKPU guided by standard CD produces significant non-uniform clustering (blue patches) on the surfaces of the "hand" and "horse." In contrast, the FCD-guided model generates an exceptionally uniform point distribution (consistent purple color), resulting in a smoother surface that visually aligns closely with the ground truth (GT) distribution characteristics.

\begin{figure}[t]
    \centering
    \includegraphics[width=\linewidth]{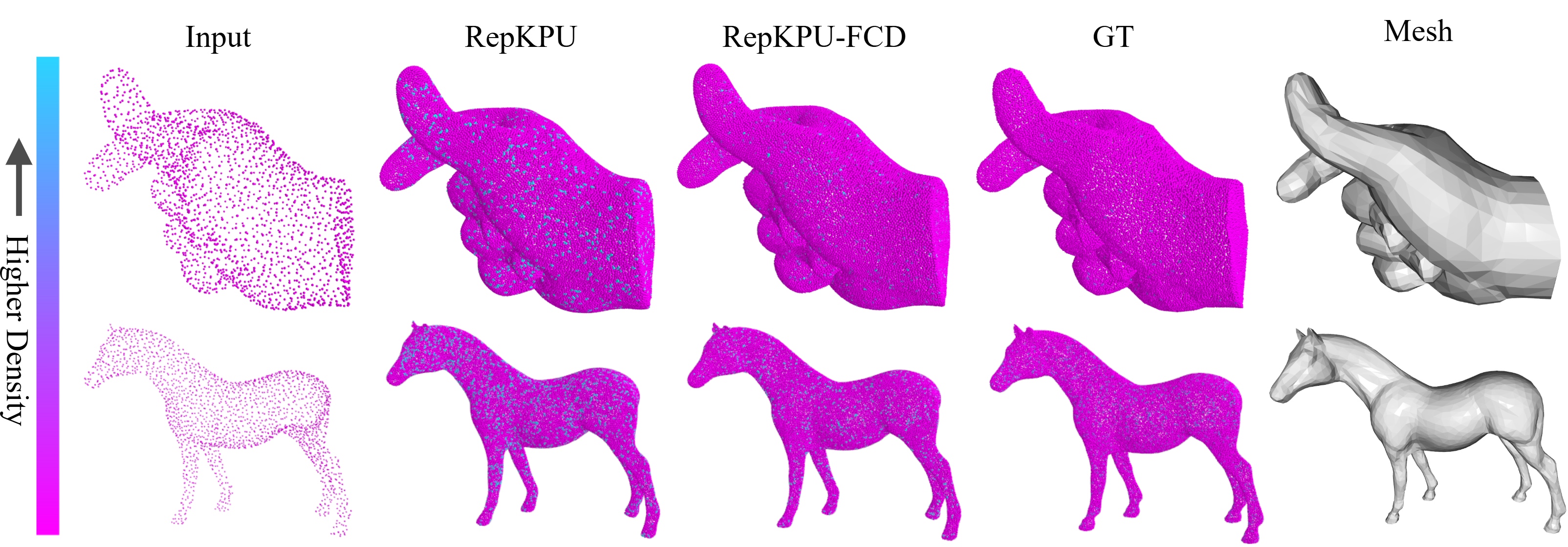}
    \caption{Visual comparison of point cloud upsampling (16x) on the PU-GAN dataset using RepKPU. The density maps reveal that the baseline method produces significant non-uniform clustering (blue patches), while RepKPU trained with FCD generates exceptionally uniform point distributions (consistent purple color) that align closely with the ground truth (GT).}
    \label{fig:repkpu}
\end{figure}

\begin{table}[t]
\centering
\caption{RESULTS OF REPKPU ON PU-GAN DATASET}
\resizebox{\linewidth}{!}{%
\begin{threeparttable}
\begin{tabular}{@{}c|c|cccccc@{}}
\toprule
                                       & Ratio & CD-\(\ell_2\)↓         & HD↓            & P2F↓           & DCD↓           & EMD↓            & F-Score↑       \\ \midrule
\multirow{2}{*}{RepKPU+CD}             & x4    & 0.246          & 1.878          & \textbf{2.467} & 0.281          & 15.909          & 0.517          \\
                                       & x16   & 0.107          & 1.936          & \textbf{2.806} & 0.244          & 47.363          & 0.887          \\ \midrule
\multirow{2}{*}{RepKPU+FCD (Static)} & x4    & \textbf{0.241} & \textbf{1.742} & 2.522          & \textbf{0.276} & \textbf{15.428} & \textbf{0.533} \\
                                       & x16   & \textbf{0.101} & \textbf{1.852} & 2.897          & \textbf{0.240} & \textbf{46.591} & \textbf{0.902} \\ \bottomrule
\end{tabular}

\begin{tablenotes}
\footnotesize
\item We report the overall results under CD-\(\ell_2\) (\(\times 10^3\)), HD (\(\times 10^3\)), P2F (\(\times 10^3\)), EMD (\(\times 10^3\)), DCD and F-Score@1\%.
\end{tablenotes}
\end{threeparttable}
}
\label{tab:repkpu}
\end{table}

\subsection{Failure Case Analysis}
\label{para:failure}
\begin{figure}[t]
    \centering
    \includegraphics[width=\linewidth]{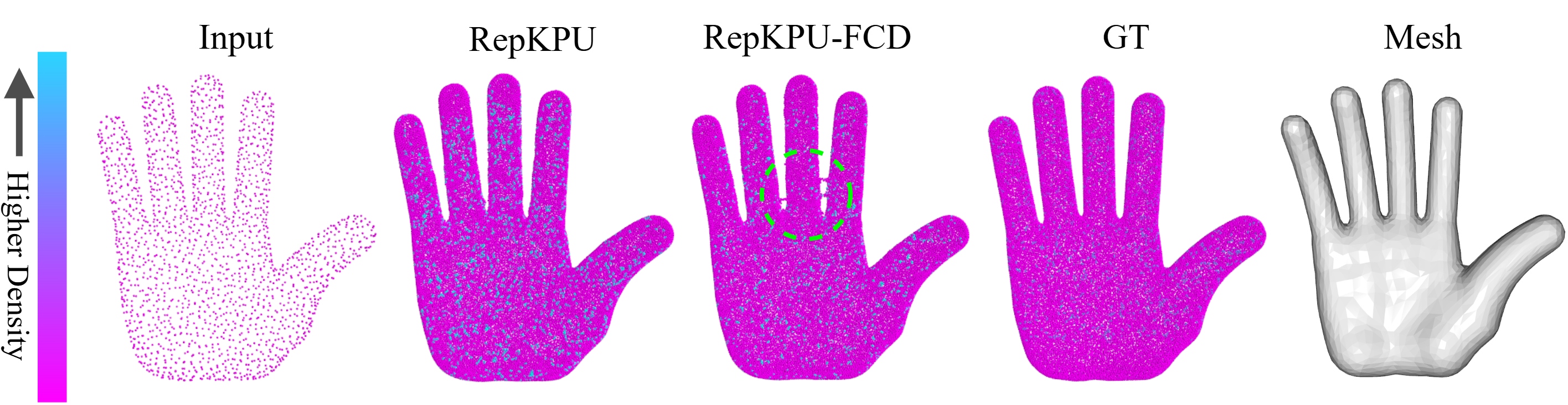}
    \caption{Visual results of a failure case in the upsampling task. While FCD improves global uniformity, its emphasis on global coverage may occasionally lead to local distortions or over-dispersion in regions requiring intricate detail preservation, as compared to the standard CD.}
    \label{fig:failure}
\end{figure}

\begin{table}[t]
\centering
\caption{COMPLEXITY ANALYSIS}
\resizebox{\linewidth}{!}{%
\begin{threeparttable}
\begin{tabular}{@{}l|llll@{}}
\toprule
\textbf{}             & Memory & Throughput      & Training Time / Epoch & Relative \\ \midrule
AdaPoinTr+CD          & 10.06G          & \textbf{65.88 samples/s} & \textbf{439.60 second}         & -                               \\
AdaPoinTr+FCD (Static) & 10.06G          & 65.31 samples/s          & 443.43 second                  & +0.87\%                         \\
AdaPoinTr+FCD (Uncertainty)    & 10.06G          & 64.63 samples/s          & 448.07 second                  & +1.93\%                         \\ \bottomrule
\end{tabular}
\begin{tablenotes}
\footnotesize
\item We report training time, GPU memory consumption, throughput and relative time increase on the AdaPoinTr network (PCN dataset, 3090 GPU).
\end{tablenotes}
\end{threeparttable}
}
\label{tab:complexity}
\end{table}

Fig.~\ref{fig:failure} visualizes an example from the point cloud upsampling task. Although the quantitative results and density distribution are superior (as shown in Table~\ref{tab:repkpu}), FCD guides the assignment switching via a relatively higher-weighted global coverage term. Consequently, compared to standard CD, FCD is more prone to introducing local distortions when applied to objects with intricate local details.

\subsection{Complexity Analysis}
We quantified the computational overhead of FCD on the AdaPoinTr network to validate its efficiency. The analysis was based on training on the PCN dataset using a single NVIDIA 3090 GPU. We selected the computationally simplest variant, FCD (Static), and the most complex variant, FCD (Uncertainty). These were compared against the baseline model (using standard CD loss) in terms of training time, GPU memory consumption, and throughput, with results presented in Table~\ref{tab:complexity}. As shown in the table, no FCD variant introduced additional memory overhead. Notably, even the most expensive configuration, FCD (Uncertainty), resulted in only a negligible 1.93\% increase in training time, with an insignificant impact on system throughput. This clearly demonstrates that the performance enhancements afforded by FCD are achieved at a minimal computational cost. For brevity, the results for other variants, whose costs fall between these two, are omitted.

\section{Conclusion and Discussion}
\label{conclusion}

We proposed the Flexible-weighted Chamfer Distance (FCD), a novel design principle and computationally negligible objective function for point cloud generation. FCD's core idea is to decouple the standard CD into local precision (\(d_{\text{CD}_{\text{local}}}\)) and global completeness (\(d_{\text{CD}_{\text{global}}}\)) sub-objectives. By adopting an asymmetric weighting strategy (\(\beta > \alpha\)), FCD prioritizes building a complete global structure, which mitigates the gradient conflicts and local minima issues (e.g., point clustering) inherent in standard CD. Extensive experiments demonstrated that this \(\beta > \alpha\) principle consistently yields significant improvements in global distribution metrics (e.g., DCD and EMD) compared to the baseline. This focus on global structure introduces a manageable trade-off with local precision metrics, and our systematic investigation into various weighting strategies (e.g., Static, Linear, Uncertainty) revealed how different schemes can be employed to navigate this balance. FCD's effectiveness and generalization were further validated across diverse tasks and data domains, including real-world scans (KITTI), complex industrial components (ABC), and point cloud upsampling (PU-GAN).

\textbf{Limitations and Future Work.}
Despite the significant efficacy of FCD, its applicability possesses inherent boundaries, primarily stemming from its design trade-offs. First, the effectiveness of FCD is contingent upon the weighting strategy. This work does not posit a single, universally optimal weighting method for all tasks; instead, we provide several generally effective schemes. The trade-off remains: an improper schedule (e.g., an excessively high \(\beta\) value) may lead to over-dispersion at the expense of local details, while degenerating to \(\alpha=\beta\) reintroduces the optimization bottlenecks of standard CD. The optimal \((\alpha, \beta)\) equilibrium point likely varies with specific task and dataset characteristics, necessitating task-specific selection or fine-tuning in practice. Second, as demonstrated in our experiments (Section~\ref{para:failure}), FCD guides assignment switching primarily via the global coverage term. Consequently, it is more prone to introducing local distortions on objects with intricate local details. Finally, the validation herein is primarily focused on supervised point cloud completion and upsampling. Future work should explore the efficacy of FCD in self-supervised, unsupervised, or large-scale scene generation settings. Furthermore, conducting more theoretical and experimental studies to establish a more adaptive or universally applicable weighting strategy remains a promising avenue for research.

\section*{Acknowledgments}
This work was supported by the Tianshan Talent Training Program (No. 2023TSYCLJ0023). We acknowledge the use of ChatGPT, Gemini, and DeepSeek for translation and language polishing of the manuscript. The scientific content and conclusions are entirely our own work.






\bibliographystyle{IEEEtran}
\bibliography{fcd}

\begin{IEEEbiography}[{\includegraphics[width=1in,height=1.25in,clip,keepaspectratio]{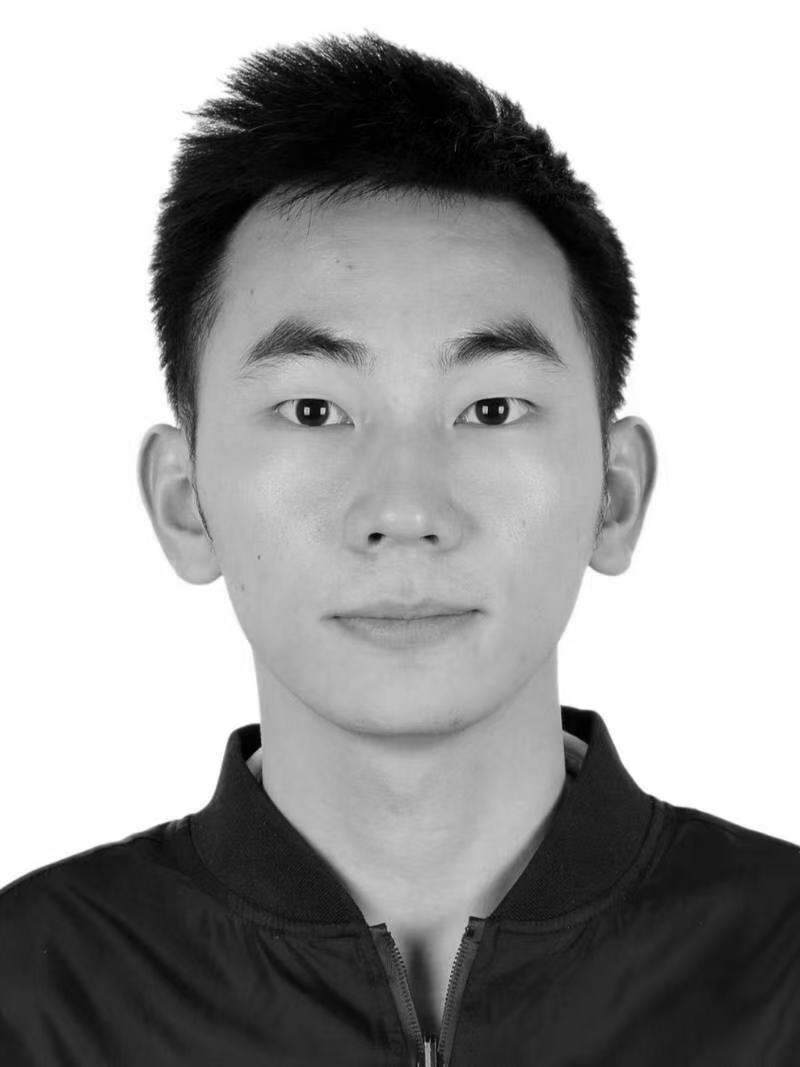}}]{Jie Li} received the B.S. degree from Hefei University of Technology, Hefei, China, in 2019. He is currently pursuing the Ph.D. degree at the College of Computer Science and Technology of Xinjiang University. His research interests include pattern recognition and 3D vision.
\end{IEEEbiography}

\begin{IEEEbiography}[{\includegraphics[width=1in,height=1.25in,clip,keepaspectratio]{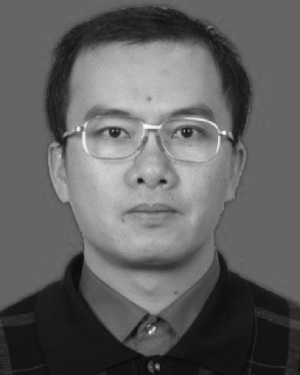}}]{Shengwei Tian} received the BS, MS, and PhD degrees from the School of Information Science and Engineering, Xinjiang University, Urumqi, China, in 1997, 2004, and 2010, respectively. Since 2002, he has been a teacher with the School of Software, Xinjiang University, where he is currently a professor. His research interests include artificial intelligence, natural language processing, and cyberspace security.
\end{IEEEbiography}

\begin{IEEEbiography}[{\includegraphics[width=1in,height=1.25in,clip,keepaspectratio]{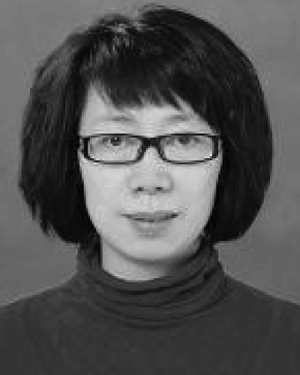}}]{Long Yu} received the B.S. and M.S. degrees from the College of Information Science and Engineering, Xinjiang University, Urumqi, China, in 1997 and 2008, respectively. Since 2002, she has been a Teacher with the College of Information Science and Engineering, Xinjiang University, where she is currently a Professor. Her research interests include artificial intelligence, data mining, natural language processing, and cyberspace security.
\end{IEEEbiography}

\begin{IEEEbiography}[{\includegraphics[width=1in,height=1.25in,clip,keepaspectratio]{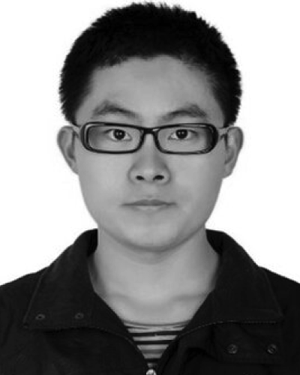}}]{Xin Ning} received his Ph.D. in 2017 from Institute of Semiconductors, Chinese Academy of Sciences. He is currently an Assistant Professor of Artificial Intelligence at Institute of Semiconductors Chinese Academy of Sciences. His research interests include deep learning machine art, pattern recognition, and image cognitive computation. He is a member of IEEE.
\end{IEEEbiography}

\vfill

\end{document}